\newtheorem{definition}{Definition}
\newtheorem{proposition}{Proposition}
\newtheorem{example}{Example}
\newtheorem{corollary}{Corollary}
\newtheorem{theorem}{Theorem}
\newtheorem{lemma}{Lemma}
\title{Online Learning with an Almost Perfect Expert}
\author{
	Simina Br\^anzei\footnote{Purdue University, USA. E-mail: \textcolor{blue}{\href{mailto:simina.branzei@gmail.com}{simina.branzei@gmail.com}}. Work done in part while visiting Microsoft Research.}\\
	\newline
	\and
	Yuval Peres\footnote{Microsoft Research, USA. E-mail: 
		\textcolor{blue}{\href{mailto:peres@microsoft.com}{peres@microsoft.com}}.}
}
\date{}
\begin{document}
\maketitle

\begin{abstract}
We study the multiclass online learning problem where a forecaster makes a sequence of predictions using the advice of $n$ experts. Our main contribution is to analyze the regime where the best expert makes at most $b$ mistakes and to show that when $b = o(\log_4{n})$, the expected number of mistakes made by the optimal forecaster is at most $\log_4{n} + o(\log_4{n})$. We also describe an adversary strategy showing that this bound is tight and that the worst case is attained for binary prediction.
\end{abstract}

\newpage

\section{Introduction}

We study the multiclass online learning problem where a forecaster is trying to make a sequence of predictions, such as whether the stock market will go up or down each day.
Every morning, for $T$ days, he solicits the opinions of a number $n$ of experts, who each make up or down predictions. Based on their predictions, the forecaster makes a choice between up and down, then buys or sells accordingly. The goal of the forecaster is to make as few mistakes as possible over time given that the sequence of outcomes and the predictions of the experts may be generated adversarially. 

This is a classic learning problem studied in a large body of literature originating in game theory with the development of fictitious play \cite{Brown49,Robinson51}, Blackwell approachability \cite{blackwell1956}, and Hannan consistency \cite{hannan}, and continued in learning theory under the paradigm of combining expert advice \cite{LW94,Vovk90}.

Perhaps the simplest version of online learning with expert advice is the binary prediction problem, which was illustrated through the stock market scenario. A careful analysis of this problem was done in \cite{BFHHSW97}, which gave bounds on the expected minimax loss of the forecaster and identified $\log_{4}(n)$ as the leading term in the low error regime where the best expert can make at most $o(\log(n))$ mistakes. Many follow-up papers studied binary prediction from different angles, such as \cite{BFHW96}, which designed an algorithm called ``binomial weights'', and \cite{ALW06}, which gave a formulation based on continuous experts that upper bounds the discrete variant.

At the other end of the spectrum there is the hedge setting, a generalization of the binary prediction problem where the forecaster predicts an unknown sequence of elements (points) from an outcome space using the predictions of the experts as inputs, after which nature reveals the next outcome. Then the forecaster incurs a loss that is a function of the point it predicted and the correct point chosen by nature.
One of the best known approaches is the Weighted-Majority algorithm \cite{LW94}, which assigns some initial weights to the experts, follows the advice of each expert with probability given by its weight, and then updates the weights of the experts in every round depending on the quality of the predictions. The average number of mistakes made by the forecaster when using such an algorithm can be bounded by the number of mistakes made by the best expert plus a term of $\sqrt{\log{n}/T}$. More generally, the multiplicative weights update method has been studied in depth and discovered independently in several fields \cite{AHK12,FS97,Young95}. Multiplicative weights gives asymptotically optimal bounds when both $n$ and the horizon go to infinity \cite{BFHHSW97}.

In this paper we study multiclass prediction from expert advice, which can be seen as an intermediate problem between binary prediction and the general hedge setting. An example is predicting (or recognizing) a sequence of digits and has applications such as discrete time signal prediction and choosing portfolios with bounded volatility \cite{Chung94}.
The multiclass prediction problem was studied in various other works such as \cite{FS97}, which bounded its loss using the general hedge framework, and, more generally, for margin classifiers \cite{ASS01} and with bandit feedback \cite{KST08,HK11,CDK09,CG13}.

We briefly survey the previous work on binary prediction as our techniques generalize an approach in \cite{AY} for binary prediction with a perfect expert. 
A basic scenario in binary prediction is when at least one of the experts 
is perfect, that is, predicts correctly every day, but the forecaster does not know which one it is. In this case the problem is to understand how the forecaster should predict in order to minimize the expected number of mistakes (i.e. loss) that he makes in any given number of days. A first known approach is to follow the majority of the leaders (i.e. experts that made no mistake so far), which guarantees the forecaster never makes more than $\log_2 n$ mistakes: each day the forecaster either predicts correctly or eliminates at least
half of experts and, clearly, never eliminates the perfect expert.
This analysis is tight when the minority is right each time and has size nearly equal to that of the majority.

However, an approach known as following a random leader
yields a slightly better guarantee -- see chapter 18 in \cite{AY} for this analysis. For any $n$, the loss of the forecaster is at most $\textrm{H}_n - 1$ in expectation, and 
this analysis is tight when the number of rounds is greater than $n$.
The optimal approach for when the number of experts $n$ is a power of two turns out to be that of using a function of the majority size, where
 the leaders are split on their advice in proportion $(x,1-x)$ with $x \ge 1/2$, follow the majority with probability $p(x)$, where the probability is given by $p(x)=1+\log_4x$.
This algorithm gives an expected loss of at most $\log_{4}{n}$ \cite{AY}
 and opens up the question of obtaining sharp bounds for the loss beyond the binary prediction with a perfect expert scenario. We will be interested in the problem where there is an upper bound $b$ on the number of mistakes made by the best expert in the set and the number of choices is larger than two.

\subsection{Our Results}

We study the multiclass prediction problem from expert advice, where there is an infinite sequence of choices (e.g. digits) drawn from a finite set, revealed one at a time, and a forecaster trying to take the correct decision in each round. After taking its decision, the forecaster learns what the correct decision was and incurs a unit cost for each mistake. The forecaster's goal is to minimize the number of mistakes (loss) made in expectation. 

The forecaster has access to a set of $n$ experts that make predictions on what the correct choice will be and we know that the best expert in this set makes at most $b = b(n)$ mistakes. 
 
The question is understanding how the forecaster should aggregate the opinions of the experts in order to predict and minimize his expected loss (number of mistakes) over time. We make no assumptions on how the sequence is generated (e.g. it may be chosen adversarially adaptive) and 
without loss of generality, each choice in the sequence is chosen from the set $[d] = \{1, \ldots, d\}$, where $d \leq n$.\footnote{This is w.l.o.g. since the number of different opinions is at most $n$.}

\smallskip

Our main result is to give upper and lower bounds for this problem, which are matching within a small additive error. 
In the following, unless specified otherwise, we refer to the forecasting problem where at the beginning there are $n$ experts that made zero errors so far.

\begin{theorem}[Upper Bound]
	Consider a forecaster with $n$ experts, where the best expert makes at most $b= b(n)$ mistakes. Then
	the expected loss of the optimal forecaster is at most 
	$$\log_{4}{(n)} + b \cdot \left[\log_{4}{\left( \log_{2}{(n+1)} \right)} +4 \right] + 1.$$
\end{theorem}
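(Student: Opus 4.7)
My plan is to generalize the Adler--Yang potential-function argument for binary prediction with a perfect expert along two axes simultaneously: from binary to multiclass prediction, and from $b = 0$ to general $b$. The core idea is to assign each expert a weight that decays geometrically in its mistake count, track a single real-valued potential over the total weighted mass, and design a randomized prediction rule so that each round's expected loss is at most the expected drop in the potential.

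Concretely, I would set $w_e(t) = \alpha^{-m_e(t)}$, where $m_e(t)$ is $e$'s mistake count by time $t$ and $\alpha := \log_{2}(n+1)$, and then let $W(t) = \sum_e w_e(t)$ and $\Phi(t) = \log_{4} W(t)$. We have $\Phi(0) = \log_{4} n$, and since the best expert's mistake count always stays $\le b$, $W(t) \ge \alpha^{-b}$, so $\Phi(t) \ge -b\log_{4}\alpha$ throughout the game. Writing $x_j$ for the weighted fraction of experts predicting answer $j$ in a given round and $y_j := (1+(\alpha-1)x_j)/\alpha$, if the correct answer turns out to be $j$ then the total weight updates to $W' = W y_j$ and $\Delta\Phi = \log_{4} y_j$. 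The forecaster picks answer $j$ with probability $p_j$ satisfying $p_j \ge 1 + \log_{4} y_j$ (taking positive parts and redistributing any shortfall so that $\sum_j p_j = 1$); this yields the per-round inequality $1 - p_j \le -\Delta\Phi$ for every $j$. Telescoping then gives expected total loss at most $\Phi(0) - \inf_t \Phi(t) \le \log_{4} n + b\log_{4}\log_{2}(n+1)$, which matches the dominant term in the claim. The base case $b = 0$ specializes to a multiclass perfect-expert algorithm whose feasibility reduces to the AM--GM estimate $\sum_j(1+\log_{4} x_j) = d + \log_{4}\prod_j x_j \le d - d\log_{4} d \le 1$ for $d \ge 2$.

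The main obstacle will be to establish feasibility of the weighted rule uniformly in $(x_j)$, $d$, and the chosen $\alpha$: I need $\sum_j \max\{0,\,1+\log_{4} y_j\} \le 1$ when $\alpha = \log_{2}(n+1)$, which should again follow from an AM--GM-type computation, now applied to the shifted quantities $y_j$ rather than $x_j$. In addition, I must carefully track the per-round slack that arises when some $p_j$ is clamped to $0$ or when the target probabilities sum to less than $1$ and must be topped up, together with the boundary effects at the transitions between mistake levels of the best expert. Quantifying this accumulated slack across the $b+1$ possible mistake levels is what should produce the additional $4b+1$ additive term in the stated bound.
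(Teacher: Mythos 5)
Your setup is essentially the paper's own: the proved upper bound comes from the potential $f_c(\vec k)=\log_\gamma\bigl(\sum_{i} c^{\,b-i}k_i\bigr)$ with prediction probabilities $p_i=\max\{0,\,1+f(\vec s_i)-f(\vec k)\}$, which is exactly your $\Phi=\log W$ with $\alpha$ playing the role of $c$. The gap is in the one step you defer to an ``AM--GM-type computation'': with the logarithm taken in base $4$, the per-round feasibility condition $\sum_j\max\{0,\,1+\log_4 y_j\}\le 1$ is false for every finite $\alpha$. Take $d=2$ and an even weighted split $x_1=x_2=\tfrac12$: then $y_1=y_2=\frac{\alpha+1}{2\alpha}>\tfrac12$, so $\sum_j\bigl(1+\log_4 y_j\bigr)=1+2\log_4\bigl(1+\tfrac1\alpha\bigr)>1$. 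In such a round no probability vector can satisfy $1-p_j\le-\Delta\Phi$ for both answers, the adversary selects the shortchanged one, and your telescoping inequality breaks. AM--GM does not rescue this; it shows the balanced split is the worst case, i.e.\ precisely where the base-$4$ inequality fails. Your hope that the accumulated excess can be folded into the $+4b+1$ term would require bounding both the overflow per round and the number of overflow rounds, neither of which is in the proposal, and it is not where those additive terms actually come from.

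The repair is exactly the paper's move: take the potential in base $\gamma=\left(\frac{2c}{c+1}\right)^2<4$ rather than base $4$ (Lemma \ref{lem:log_combo}). With that base the binary even split is exactly tight, since $\log_\gamma\frac{c+1}{2c}=-\tfrac12$, and one still has to check that $a=2$ is the worst support size among all $a\ge 2$ (the paper does this via the inequality $\left(\frac{2c}{c+1}\right)^{2(1-x)}(1+(c-1)x)\le c$ and a derivative argument), together with the domination lemma that lets one ignore experts voting outside the adversary's chosen subset -- a point your clamping treats only implicitly. The additive $+1$ and $+4b$ in the statement then arise not from clamping slack but from converting $\log_\gamma$ back to $\log_4$ at the end: with $c\approx\log_2(n+1)$ one has $\log_2\gamma\ge\bigl(\tfrac12+\tfrac1{\log_2(n+1)}\bigr)^{-1}$, and multiplying $\log_2\bigl(c^b n\bigr)$ by the resulting $1+O(1/\log n)$ factor produces exactly those terms. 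So the plan is right in spirit, but the central inequality you assume is false as stated, and the fix is precisely the change of base you omitted.
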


We also give a tighter upper bound when $1 \leq b \leq \ln{(n)}/2$, which is matching the lower bound within an additive term of $O(b)$.

\begin{theorem}[More Precise Upper Bound]
		Consider a forecaster with $n$ experts, where the best expert makes at most $b= b(n)$ mistakes. For any $1 \leq b < \ln{(n)}/2$,
the expected loss of the optimal forecaster is at most 
$$
\left( 1 + 2b/\ln(n)\right) \left( \log_{4}{(n)} +
b \cdot \log_{4}{\left(\log_{2}{n}/b\right)} \right).
$$
\end{theorem}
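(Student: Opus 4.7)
The plan is to set up and solve a self-referential inequality for the expected loss, building on a virtual-experts reduction to the perfect-expert problem. For each original expert $i \in [n]$ and each subset $S$ of rounds with $|S| \leq b$, I introduce a virtual expert $(i, S)$ that agrees with $i$ on rounds outside $S$ and is declared correct on rounds in $S$. If the best expert is $i^*$ with actual mistake pattern $M^*$ (where $|M^*| \leq b$), then the virtual expert $(i^*, M^*)$ is perfect. Applying the optimal perfect-expert forecaster (whose $\log_4 N$ guarantee is quoted in the introduction) to this virtual pool yields expected loss at most $\log_4 N$, where $N$ is the count of virtual experts.

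The key to tightening over a naive horizon-dependent count is to count $N$ sparingly: on rounds where the forecaster is already correct, the virtual-leader dynamics contribute nothing to the mistake count, so I may restrict attention to subsets $S$ contained in the forecaster's (random) mistake set. With $L$ denoting the forecaster's total loss, this gives
\[
N \leq n \sum_{k=0}^{b} \binom{L}{k} \leq n \left( \frac{eL}{b} \right)^{b}.
\]
Combining with the previous bound and applying Jensen's inequality to the concave map $L \mapsto b \log_4(eL/b)$ then yields the self-referential bound
\[
E[L] \leq \log_4(n) + b \log_4\!\left( \frac{e\,E[L]}{b} \right).
\]

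Next, I verify that $L^* := (1 + 2b/\ln n)\bigl(\log_4 n + b \log_4(\log_2 n / b)\bigr)$ satisfies this inequality as an upper bound. Plugging $L^*$ into the right-hand side and using the hypothesis $b < \ln(n)/2$ together with the base-change identity $\ln 4 = 2 \ln 2$ reduces the verification to a direct algebraic calculation; the multiplicative factor $1 + 2b/\ln n$ is tuned precisely to absorb the residual that arises when $\log_4(L^*/b)$ is expanded around $\log_4(\log_2 n / b)$.

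The main obstacle is the rigorous justification of the virtual-experts reduction: a wildcard virtual expert's prediction on a round $t \in S$ depends on the yet-unseen true outcome at $t$, raising a causality concern. The resolution is that the $\log_4 N$ guarantee is a potential-based argument requiring only the number of virtual leaders at each round, and this count evolves deterministically given past outcomes; the algorithm never needs to query a wildcard's prediction at decision time. Once this reduction is justified, the remaining verification reduces to a routine computation.
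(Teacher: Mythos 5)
Your route is genuinely different from the paper's (which plugs the potential $f_{c}(\vec k)=\log_{\gamma}\bigl(\sum_{i} c^{b-i}k_i\bigr)$, $\gamma=(2c/(c+1))^2$, into the abstract theorem and then sets $c=\log_2(n)/b$), but as written it has a real gap at its central step: the restricted virtual-expert pool is not a legitimate object to run the perfect-expert algorithm on. You define the pool as all pairs $(i,S)$ with $S$ contained in the forecaster's mistake set, but that set depends on the forecaster's own internal randomization and on future rounds; it is only determined at the end of play. So the pool is random, correlated with the algorithm it is supposed to define, and unknown at decision time. Your stated resolution --- that the leader count ``evolves deterministically given past outcomes'' --- is false for this pool: whether round $t$ joins the forecaster's mistake set depends on the forecaster's coin flips at round $t$, and whether a subset $S$ containing future rounds is admissible cannot be resolved at time $t$ at all. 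The reduction is therefore circular, and the self-referential inequality $E[L]\le \log_4 n + b\log_4(eE[L]/b)$ is not established.

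There is a second, independent problem: even for a fixed wildcard pool, the $\log_4 N$ perfect-expert guarantee does not apply as a black box. The optimal perfect-expert forecaster needs the \emph{split} of the current leaders among the options (it follows the majority with probability $1+\log_4 x$), not merely their number; a wildcard expert $(i,S)$ with $t\in S$ casts no vote at round $t$, and declaring it correct regardless of the outcome breaks the accounting that matches the per-round expected loss to the drop in the potential $\log_4(\#\text{leaders})$ --- which is precisely why the loss with $b$ forgiven mistakes exceeds $\log_4 n$. Making the ``charge only forecaster-mistake rounds'' idea rigorous is essentially the binomial-weights construction of \cite{BFHW96,BFHHSW97}: one must design a new potential whose per-round decrease dominates the expected loss (this is what the paper's Lemma~\ref{lem:log_combo} plus Theorem~\ref{thm:abstract} accomplish), rather than reuse the perfect-expert bound verbatim. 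Finally, the closing step --- that $L^*=(1+2b/\ln n)(\log_4 n + b\log_4(\log_2 n/b))$ satisfies your self-referential inequality --- is asserted, not verified; the extra $b\log_4(e/2)$-type terms would need to be checked against the $2b/\ln n$ slack under the hypothesis $b<\ln(n)/2$.
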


Our result shows in particular that in the regime where the loss of the best expert is small enough (i.e. $o(\log_4{n})$) the leading term in the expected loss of the forecaster remains $\log_4{n}$.
The upper bound also implies an algorithm for the forecaster. 

The best previously known upper bounds come from several prior works. First, the classic paper on how to use expert advice by \cite{BFHHSW97} also provides an upper bound on the expected loss when the maximum loss of the best expert is known apriori. This bound has the correct leading term of $\log_{4}(n)$ as we also do, but the second order term is not explicit and the analysis is only done for binary prediction.
Second, a bound on our multiclass online learning problem can also be obtained by setting an appropriate learning rate in the multiplicative weights method, which for any $\eta > 0$ has a regret \footnote{The regret is defined as the difference between the expected loss of the forecaster and the loss of the best expert} of at most $\frac{\eta \cdot b + \ln{n}}{1 - e^{-\eta}}$ (see Theorem 2.4, Chapter 2, \cite{BL06-book}). For our setting, optimizing over $\eta$, implies that the upper bound on the expected loss given by multiplicative weights is: $\ln{n} + b \cdot \ln{\left(\frac{e \cdot \ln{n}}{b}\right)} + O\left(\frac{b \cdot (\ln{R})^2}{R}\right)$, for $R > e$ and $1 \leq b \leq \frac{\ln{n}}{R}$.
This bound is a constant factor away from the optimum, since the correct leading term is $\log_{4}(n)$.

\bigskip
In addition to the upper bounds, we also describe a strategy for the adversary that yields the following lower bound.

\begin{theorem}[Lower Bound]
	Consider a forecaster with $n$ experts, where the best expert makes at most $b = b(n)$ mistakes.
	If $b(n) \leq \lfloor \log{n} \rfloor/5$, then any algorithm used by the forecaster has in the worst case an expected loss of at least 
	$$0.5 \cdot \left\lfloor \log_2{n} \right\rfloor  +  0.5 \cdot \left\lfloor \log_2{{\lfloor \log_2{n} \rfloor\choose b}} \right\rfloor - 0.5 \cdot b.$$
\end{theorem}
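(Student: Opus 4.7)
The plan is to exhibit an adaptive adversary strategy that forces the claimed expected loss. Since the paper observes that the worst case is binary prediction, we restrict to $d = 2$ outcomes. Let $L = \lfloor \log_2 n \rfloor$ and $T = L + \lfloor \log_2 \binom{L}{b} \rfloor - b$. The goal is to show the adversary can run a $T$-round game in which every round forces expected loss at least $1/2$ while some expert finishes with at most $b$ mistakes.

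The adversary tracks the bucket counts $N_m^{(t)}$ of still-alive experts (those with at most $b$ mistakes so far) having exactly $m$ mistakes after $t$ rounds, for $0 \le m \le b$, starting from $N_0^{(0)} = n$. In each round the adversary (i) partitions each bucket as evenly as possible into a group predicting $0$ and a group predicting $1$; (ii) observes the forecaster's randomized prediction $p_t = \Pr[\hat y_t = 1]$; (iii) reveals $y_t \in \{0,1\}$ as the outcome on which the forecaster places probability at most $1/2$, forcing expected loss at least $1/2$ in the round. The adversary's freedom to choose expert predictions is what makes step (i) implementable.

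In the idealized case of exact halvings, Pascal's identity $N_m^{(t+1)} = (N_m^{(t)} + N_{m-1}^{(t)})/2$ yields the closed form $N_m^{(t)} = n \binom{t}{m}/2^t$, and the game is alive as long as $\Phi_t = \sum_{m=0}^{b} N_m^{(t)} \geq 1$. For our choice of $T$ we have $2^T/n \leq \binom{L}{b}/2^b$. The hypothesis $b \leq L/5$ implies $\binom{L}{b} \geq \binom{5b}{b} \geq 4^b > 2^b$, so $\lfloor \log_2 \binom{L}{b} \rfloor \geq b$, whence $T \geq L$ and $\binom{T}{b} \geq \binom{L}{b}$. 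Combining, $N_b^{(T)} = n \binom{T}{b}/2^T \geq 2^b \geq 1$, so the game survives $T$ rounds for a total expected loss of at least $T/2$, matching the claim.

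The principal obstacle I anticipate is that the bucket counts are integer-valued, so the exact halving in step (i) is impossible when $N_m^{(t)}$ is odd. The $2^b$ of headroom we derived on $N_b^{(T)}$ above---precisely the $-b$ correction in the statement---is the slack that absorbs the $\pm 1/2$ per-round rounding errors accumulated across $T$ rounds. To realize this, I would have the adversary coordinate the placement of the larger half of each bucket in step (i) with its choice of $y_t$ in step (iii), so that bucket-$b$ eliminations are bounded by $\lfloor N_b^{(t)}/2 \rfloor$ per round whenever the forecaster plays its loss-optimal $p_t = 1/2$; a careful induction then keeps $\Phi_t$ above the threshold needed to reach round $T$.
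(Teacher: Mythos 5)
Your skeleton matches the paper's proof: restrict to binary outcomes, have the adversary split every bucket as evenly as possible, force expected loss $1/2$ per round, note that idealized counts evolve as $n\binom{t}{m}/2^t$, and pick the same horizon $T=\lfloor\log_2 n\rfloor+\lfloor\log_2\binom{L}{b}\rfloor-b$, with the condition $b\le L/5$ entering exactly where you use it. The genuine gap is the integrality analysis, which is the technical heart of the paper's argument and which you only gesture at. Your accounting --- that the $2^b$ headroom in $N_b^{(T)}$ ``absorbs the $\pm 1/2$ per-round rounding errors accumulated across $T$ rounds'' --- does not survive scrutiny if errors simply add up: a naive accumulation gives a deficit of order $T/2\approx\tfrac12\lfloor\log_2 n\rfloor$, which dwarfs $2^b$ in the main regime (already for $b=1$). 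The bound is saved only because a deficit injected at an earlier round is itself halved in all subsequent rounds; making this precise is exactly what the paper does by passing to $h_i(t)=2^t g_i(t)$, deriving $h_i(t)\ge h_{i-1}(t-1)+h_i(t-1)-2^{t-1}$ from the per-round rounding loss of $1/2$, and proving by induction that $g_i(t,2^k)\ge 2^{k-t}\binom{t}{i}-2^i+1$, i.e.\ the total deficit at bucket $i$ is at most $2^i-1$ \emph{uniformly in $t$}. Without this (or an equivalent) induction your argument does not close.

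Two further points on your proposed fix. First, coordinating the rounding direction with the choice of $y_t$ is not available in general: whenever the forecaster plays $p_t\neq 1/2$, the requirement of forcing loss at least $1/2$ pins $y_t$ down to the unique outcome with forecaster probability at most $1/2$, leaving the adversary no freedom to steer the rounding; so the mechanism you describe only works in the special case $p_t=1/2$. The paper sidesteps this by building the control into the decomposition itself (when several buckets are odd, it alternates which side receives the extra expert as the bucket index varies) and by letting the adversary flip a fair coin, which yields expected loss exactly $1/2$ per round against any forecaster and guarantees, for \emph{either} coin outcome, the per-bucket recursion with loss only $1/2$. Second, ``alive as long as $\Phi_t\ge 1$'' is not quite the right criterion --- with a single surviving expert holding exactly $b$ mistakes the adversary can no longer force loss without violating its promise --- though this is harmless for you since your idealized bound gives $N_b^{(T)}\ge 2^b\ge 2$; the paper's statement of its stopping time $t_b^*$ via $g_b(t,n)\le 1$ handles the same point.
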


Note the difference between the upper and lower bound is $O(b)$.
Our upper bound is obtained by establishing first an abstract theorem showing that \emph{any} function that satisfies three conditions is an upper bound on the expected loss of a forecaster that plays optimally against a worst case adversary. Every such function also gives the forecaster a strategy for playing in such a way that its expected loss is upper bounded by $f$. In particular, this can be used to obtain a polynomial time algorithm for estimating the expected loss from any starting configuration. 

Our abstract theorem is given next and is reminiscent of the method for deriving online learning algorithms from a minimax analysis (for previous work using this type of approach, see, e.g., \cite{ALW06,RSS12}). 

We will denote a state by $\vec{k} = (k_0, \ldots, k_b)$, where there are $k_i$ experts that made $i$ mistakes so far. The experts are divided on the prediction of the next choice so that for each option $j$ there is a vector $\vec{k}^j = (k_{0}^j, \ldots, k_b^j)$ to denote that $k_i^j$ experts with $i$ mistakes so far vote for $j$ next. The partition given by the vectors $\vec{k}^1, \ldots, \vec{k}^d$ is said to be a decomposition. 
For each $j$, we denote the successor state obtained if option $j$ turns out to be correct by a $(b+1)$-dimensional vector
$
s_j = s_j(\vec{k}^1, \ldots, \vec{k}^d)$, so that the value at the first coordinate is $k_0^j$, while each coordinate $i > 1$ has value $k_{i}^j + \sum_{v \neq j} k_{i-1}^v$.



\begin{theorem}[Abstract Theorem]
	Let $f : \mathbb{N}^{b+1} \setminus \{(0, \ldots, 0) \} \rightarrow \Re$ be any function that satisfies the properties
	\begin{enumerate}
		\item Boundary: $f(0, \ldots, 0, 1) \geq 0$  
		\item Decomposition property: for every state $\vec{k}$, subset $A \subseteq [d]$ of $a \geq 1$ choices, and decomposition $(\vec{k}^1, \ldots, \vec{k}^d)$ 
		$$
		f(\vec{k}) \geq \frac{a-1}{a} + \frac{1}{a} \cdot \sum_{i \in A} f(\vec{s}_i)
		$$
		where $\vec{s}_i$ is the successor state if choice $i$ is correct.
	\end{enumerate}  
	
	Then the forecaster has a strategy ensuring the expected loss in the worst case is upper bounded by $f$ as follows: from any state $\vec{k}$, observe the decomposition given by the opinions of the experts and the successor states $\vec{s}_1, \ldots, \vec{s}_d$. Then predict choice $i$ with probability 
$p_i = \max\{0, 1 + f(\vec{s}_i) - f(\vec{k})\}$ for all $i < d$ and choice $d$ with probability $p_d = 1 - \sum_{i < d} p_i$.
\end{theorem}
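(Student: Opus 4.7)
The plan is to prove by strong induction on the potential $\Phi(\vec{k}) := \sum_{i=0}^{b}(b+1-i)\, k_i$ that, starting from any state $\vec{k}$, the forecaster's prescribed strategy has expected total loss at most $f(\vec{k})$ against every adversary. A direct computation gives $\Phi(\vec{k}) - \Phi(\vec{s}_j) = \sum_{v\neq j}|\vec{k}^v|$, so $\Phi$ strictly decreases on every round except the trivial-matching one (where every expert votes for the realized outcome). The only state with $\Phi = 1$ is $(0,\ldots,0,1)$; there the lone surviving expert has already made $b$ mistakes, so by hypothesis the adversary cannot force it to err again, the forecaster incurs zero further loss, and the boundary condition $f(0,\ldots,0,1)\ge 0$ closes the base case. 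Before the inductive step I would check that the strategy is well-defined: letting $A' = \{i<d : 1 + f(\vec{s}_i) - f(\vec{k}) > 0\}$, applying the decomposition property with $A = A'$ rearranges to $\sum_{i\in A'}(1 + f(\vec{s}_i) - f(\vec{k})) \le 1$, which is exactly $\sum_{i<d} p_i \le 1$, so $p_d \in [0,1]$.

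\textbf{Inductive step.}
Fix a decomposition $(\vec{k}^1,\ldots,\vec{k}^d)$ and any outcome $j$. By the inductive hypothesis at $\vec{s}_j$, the expected total loss is at most $(1 - p_j) + f(\vec{s}_j)$, and I must show this is $\le f(\vec{k})$ in the three cases that partition $\{1,\ldots,d\}$:
(i) if $j < d$ and $p_j > 0$, the defining formula gives $(1-p_j) + f(\vec{s}_j) = f(\vec{k})$ with equality;
(ii) if $j < d$ and $p_j = 0$, then $1 + f(\vec{s}_j) - f(\vec{k}) \le 0$, so $(1-p_j) + f(\vec{s}_j) = 1 + f(\vec{s}_j) \le f(\vec{k})$;
(iii) if $j = d$, apply the decomposition property with $A = A' \cup \{d\}$ of size $|A'|+1$, multiply through by $|A'|+1$, and substitute the identity $1 - p_d = \sum_{i \in A'}(1 + f(\vec{s}_i) - f(\vec{k}))$ to obtain $f(\vec{s}_d) \le f(\vec{k}) - (1 - p_d)$, as required.
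In the trivial-decomposition case where all experts vote for some option $j^*$, the decomposition property with two-element subsets $A = \{j^*, i\}$ forces $f(\vec{s}_i) \le f(\vec{k}) - 1$ and hence $p_i = 0$ for every $i \neq j^*$, so the forecaster places probability one on $j^*$; a matching outcome gives zero loss with the state unchanged, while any other outcome falls into case (ii) with $\Phi$ strictly decreasing.

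\textbf{Main obstacle.}
The delicate step is case (iii), where the asymmetric treatment of option $d$ in the strategy must be reconciled with the symmetric decomposition property. The key move is selecting $A = A' \cup \{d\}$, after which an algebraic manipulation cancels the $d$-specific terms exactly; the other cases are essentially forced by the definition of $p_j$. A minor subtlety is that $\Phi$ fails to strictly decrease on the trivial-matching branch, but because the forecaster loses nothing there, strong induction still applies.
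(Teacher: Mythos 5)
Your proof is correct, but it takes a genuinely different route from the paper's. The paper verifies validity of the probabilities exactly as you do (via the set of indices with positive $p_i$ and the decomposition property), but for the loss bound it does not analyze the prescribed strategy directly: it invokes the earlier structural results (Lemma \ref{lem:general} and Lemma \ref{lem:minineq}) stating that an optimal adversary plays uniformly over some subset $A$ with all experts voting inside $A$, and then bounds the minimax loss by induction via $\ell(\vec{k}) = \frac{a-1}{a} + \frac{1}{a}\sum_{i\in A}\ell(\vec{s}_i) \leq \frac{a-1}{a} + \frac{1}{a}\sum_{i\in A} f(\vec{s}_i) \leq f(\vec{k})$. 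Your argument is instead self-contained: it bounds the prescribed strategy pathwise, outcome by outcome, with the clean three-case analysis (the $A = A'\cup\{d\}$ trick neatly resolves the asymmetric treatment of option $d$, and cases (i)--(iii) check out algebraically), and it replaces the paper's somewhat informal ``induction on the state'' by an explicit potential $\Phi$ together with a separate treatment of the unanimous-vote/matching rounds where the state does not change — a point the paper only handles indirectly inside Lemma \ref{lem:minineq} by discarding degenerate rounds. What your approach buys is that it proves the theorem exactly as stated (the explicit strategy attains the bound against an arbitrary, possibly adaptive adversary) without relying on the minimax characterization; what the paper's approach buys is brevity, since the heavy lifting is already done in Lemma \ref{lem:minineq}. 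One shared technicality worth a sentence in a final write-up: for options $i$ whose successor would be $\vec{0}$ (all surviving experts wrong), $f(\vec{s}_i)$ is outside the domain of $f$, so one should set $p_i = 0$ by convention (the adversary can never realize such an outcome given the promise on the best expert); with that convention your base case at $(0,\ldots,0,1)$ and the unanimous-vote case go through as written.
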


We also obtain a recursive algorithm for computing the minimax loss by iterating over all the possible subsets $A$ of choices and decompositions.

\begin{theorem}[Exact Algorithm]
	The function $\ell$ that represents the minimax loss of the forecaster is given by the following recursion:
	$$
	\ell(\vec{k}) = 
	\max_{A, \vec{k}^1, \ldots, \vec{k}^d} \frac{a-1}{a} + \frac{1}{a} \cdot \sum_{i \in A} \ell(\vec{s}_i)
	$$
	where $\vec{s}_i$ is the successor state of decomposition $(\vec{k}^1, \ldots, \vec{k}^d)$ if choice $i$ is correct and
	the base case is $\ell(0, \ldots, 0, 1) = 0$.
	This gives an algorithm for computing the exact value of $\ell$.
\end{theorem}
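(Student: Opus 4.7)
The plan is to derive the recursion as the Bellman equation of the sequential zero-sum game between the forecaster and the adversary and then show that this equation admits a well-founded computation. Write $\ell(\vec{k})$ for the minimax loss of the game starting at state $\vec{k}$. In one round of play, the adversary first commits to a decomposition $(\vec{k}^1,\ldots,\vec{k}^d)$; the forecaster observes it and commits to a distribution $p$ on $[d]$; the adversary then chooses the correct answer $i^\star$; the forecaster incurs instantaneous loss $1 - p_{i^\star}$ and a to-go cost $\ell(\vec{s}_{i^\star})$. This gives the Bellman identity $\ell(\vec{k}) = \max_{\text{decomp}} V(\vec{k};\text{decomp})$, where $V$ is the value of the one-shot zero-sum game with payoff $\sum_i q_i\bigl[(1-p_i) + \ell(\vec{s}_i)\bigr]$ (forecaster minimizes over $p$, adversary maximizes over $q$).

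Next I would solve that inner one-shot game by minimax. The game is finite so the minimax theorem applies, and the forecaster's optimal $p$ satisfies a water-filling condition: there is a nonempty active set $A \subseteq [d]$ such that $p_i > 0$ iff $i \in A$, on which the quantity $(1-p_i) + \ell(\vec{s}_i)$ attains a common value $c$, and off of which it is at most $c$. Summing $p_i = 1 + \ell(\vec{s}_i) - c$ over $i \in A$ and using $\sum_{i \in A} p_i = 1$ gives $c = (a-1)/a + (1/a)\sum_{i \in A}\ell(\vec{s}_i)$, with $q$ uniform on $A$ realizing the adversary's side. Taking the outer max over the decomposition and the inner max over $A$ yields exactly the recursion in the theorem. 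The base case $\ell(0,\ldots,0,1) = 0$ is immediate: the only remaining expert has already used its full mistake budget, so it must be perfect from here on and the forecaster follows it losslessly.

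For the algorithmic claim I would use the potential $\Phi(\vec{k}) = \sum_{i=0}^b (b+1-i)\, k_i$. A short coefficient-matching computation shows that if choice $j$ is correct then $\Phi(\vec{s}_j) = \Phi(\vec{k}) - \sum_{v \neq j}|\vec{k}^v|$, so $\Phi$ strictly decreases whenever at least one expert dissents from the realized answer. Since each $k_i \leq n$, the state space is finite and each state admits only finitely many decompositions and subsets $A$, so backward induction on $\Phi$ yields a dynamic program computing $\ell$.

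The main obstacle is the unanimous case: if every expert votes for some option $i$, then $\vec{s}_i = \vec{k}$ and $\Phi$ does not decrease, so naively including such an $i$ in the active set $A$ makes the recursion self-referential. I would handle this by observing that the resulting equation has the form $\ell(\vec{k}) = (a-1)/a + (1/a)\bigl[\ell(\vec{k}) + \sum_{i' \in A\setminus\{i\}}\ell(\vec{s}_{i'})\bigr]$, which is linear in $\ell(\vec{k})$ with coefficient $1/a < 1$ (for $a \geq 2$) and hence has a unique closed-form solution in terms of values at successors of strictly smaller $\Phi$. Equivalently, one may restrict the outer maximization to decompositions in which every $i \in A$ has at least one dissenter; the identity above shows that no value is lost by this restriction, and the restricted recursion is a bona fide DP over $\Phi$-decreasing transitions.
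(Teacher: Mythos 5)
Your main route is sound and genuinely different from the paper's. The paper does not re-derive the recursion inside this theorem at all: it proves Lemma~\ref{lem:minineq} first, by induction on states, using the game-value identity of Lemma~\ref{lem:general}, the domination Lemma~\ref{lem:major} (to push all votes into $A$), strict monotonicity of $\ell$ along successors, and an extreme-point analysis of the adversary's linear program to conclude that the optimal adversary randomizes uniformly over some subset $A$ with $|A|\geq 2$; the Exact Algorithm theorem is then a one-line corollary. You instead solve the inner one-shot game directly by a water-filling argument: equalize $1-p_i+\ell(\vec{s}_i)$ on the active set, check that uniform play on that set is an adversary best response, and read off the value $\frac{a-1}{a}+\frac{1}{a}\sum_{i\in A}\ell(\vec{s}_i)$. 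That is a cleaner replacement for the paper's LP extreme-point step, and your potential $\Phi(\vec{k})=\sum_i(b+1-i)k_i$, with $\Phi(\vec{s}_j)=\Phi(\vec{k})-\sum_{v\neq j}|\vec{k}^v|$, makes the ``this is an algorithm'' claim more explicit than the paper does.

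There is, however, one genuine error in your treatment of the unanimous (self-loop) case. Your primary fix --- keeping the self-referential candidate and solving $\ell(\vec{k})=\frac{a-1}{a}+\frac{1}{a}\bigl[\ell(\vec{k})+\sum_{i'\in A\setminus\{i\}}\ell(\vec{s}_{i'})\bigr]$ in closed form, i.e.\ treating $\ell(\vec{k})$ as the unique fixed point of a piecewise-linear map with slopes $1/a<1$ --- is correct and suffices. But the claimed ``equivalent'' restriction to decompositions in which every $i\in A$ has at least one dissenter is \emph{not} value-preserving. Take the state $(0,\ldots,0,1,0)$: a single expert with $b-1$ mistakes. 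The paper's base case in Lemma~\ref{lem:minineq} shows $\ell=1$, and the adversary strategy achieving it is exactly a self-loop (the lone expert votes choice $1$, the adversary is uniform on $\{1,2\}$, and with probability $1/2$ the state does not move). Under your restriction the expert's own choice must be excluded from $A$, every admissible successor has value $0$, and the restricted recursion yields at most $\frac{d-1}{d}<1$. So drop the ``equivalently'' sentence and keep only the fixed-point resolution (together with the $\Phi$-based induction for the strictly decreasing transitions); with that correction the argument goes through.
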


An example of the loss function for the two dimensional case with $b=1$ mistakes is given in the next figure.

\begin{figure}[h!]
\centering
\includegraphics[scale = 0.5]{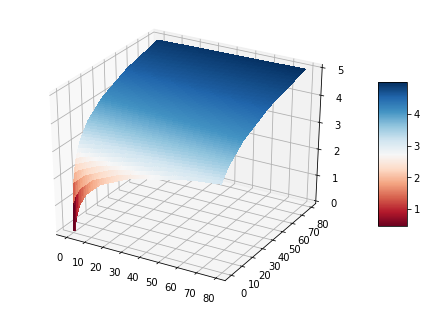}
\label{fig:loss_intro}
\caption{Minimax loss function for $b=1$ mistakes starting from state $(i,j)$, for $i,j=0 \ldots 80$.}
\end{figure}

Finally, we compute the exact value for the expected loss in the case of a perfect expert. This value is exactly $\log_4{n}$ when $n$ is a power of $2$ and is obtained by interpolating between the values of powers of two otherwise.

\begin{theorem}[Perfect Expert]
	The expected loss of a forecaster with $n$ experts, one of which is perfect, is 
	$$\ell(n) = 0.5 \cdot \left(k-1+\frac{n}{2^k}\right)$$
	where $k$ is such that $2^k \le n < 2^{k+1}$.
\end{theorem}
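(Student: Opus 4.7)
The plan is to specialize the Exact Algorithm Theorem to $b=0$, where the state collapses to a single integer $n$ (the number of surviving experts), and then prove $\ell(n)=g(n)$ by strong induction on $n$, writing $g(n):=0.5(k-1+n/2^k)$ with $k=\lfloor \log_2 n\rfloor$. The base case $\ell(1)=g(1)=0$ is immediate.

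First I would establish three properties of $g$. Extending $g$ continuously and piecewise-linearly to $[1,\infty)$, the doubling identity $g(2x)=g(x)+1/2$ implies the discrete balanced-split equality
\[
g(n)\;=\;\tfrac{1}{2}\bigl(1+g(\lceil n/2\rceil)+g(\lfloor n/2\rfloor)\bigr);
\]
the delicate case is $n=2^{k+1}-1$, where $\lfloor n/2\rfloor=2^k-1$ and $\lceil n/2\rceil=2^k$ both lie in the single linear piece $[2^{k-1},2^k]$, so the midpoint formula $g(m)+g(m+1)=2g(m+\tfrac{1}{2})$ holds as an \emph{exact} identity rather than merely a concavity inequality. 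Second, $g$ is concave, since its slope on $[2^k,2^{k+1}]$ equals $2^{-k-1}$, strictly decreasing in $k$. Third, and most importantly, for every $a\ge 1$ and positive integers $k_1,\ldots,k_a$ with $N:=\sum_i k_i$,
\[
F_a\;:=\;\tfrac{a-1}{a}+\tfrac{1}{a}\sum_{i=1}^{a}g(k_i)\;\le\;g(N).
\]
I would prove this by induction on $a$. At $a=2$ the balanced-split identity combined with concavity (which dominates any other split of a fixed sum) yields $g(k_1)+g(k_2)\le 2g(k_1+k_2)-1$, i.e.\ $F_2\le g(k_1+k_2)$. For the step, I fold two summands using this $a=2$ bound; a short calculation gives $aF_a-(a-1)F_{a-1}=1+g(k_1)+g(k_2)-g(k_1+k_2)\le g(k_1+k_2)\le g(N)$, and the inductive hypothesis $F_{a-1}\le g(N)$ applied to the $(a-1)$ summands $k_1+k_2,k_3,\ldots,k_a$ (same total $N$) yields $F_a\le g(N)$.

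With these in hand the induction on $n$ is routine. By the Exact Algorithm Theorem, $\ell(n)$ is the maximum of $\tfrac{a-1}{a}+\tfrac{1}{a}\sum_{i\in A}\ell(k^i)$ over subsets $A$ and decompositions; in any non-trivial decomposition every surviving $k^i$ is strictly less than $n$, so the inductive hypothesis gives $\ell(k^i)=g(k^i)$ and the third property bounds the expression by $g(N)\le g(n)$, while the degenerate decomposition where every expert votes identically contributes only the tautology $\ell(n)\le\ell(n)$ and no new constraint. For the matching lower bound, the balanced binary decomposition with $|A|=2$, $k^1=\lceil n/2\rceil$, $k^2=\lfloor n/2\rfloor$ attains exactly $g(n)$ by the first property. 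The hard part will be the third property: its proof depends essentially on the \emph{exact} equality in the balanced-split identity, because the generic concavity bound $g(n)\ge\tfrac{1}{2}(1+g(\lceil n/2\rceil)+g(\lfloor n/2\rfloor))$ by itself is too weak to pin down $\ell(n)$, and the careful treatment of the piecewise-linear boundary at $n=2^{k+1}-1$ is what keeps the recursion tight.
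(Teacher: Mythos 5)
Your argument has a circularity problem: you take the Exact Algorithm Theorem (the reduction of the adversary to uniform probabilities over a subset $A$) as your starting point, but in this paper that theorem is a consequence of Lemma \ref{lem:minineq}, and the base case of the induction in Lemma \ref{lem:minineq} is precisely the one-dimensional state $(0,\ldots,0,k_b)$, which the paper handles by invoking Lemma \ref{lem:f_1d} and the Perfect Expert theorem itself. So you cannot use the uniform-adversary characterization to prove the Perfect Expert theorem without first re-establishing that characterization for the $b=0$ problem from scratch (e.g.\ by redoing the linear-programming extreme-point argument in one dimension), which your proposal does not do. The gap matters only for the upper bound direction: your lower bound (balanced split, two choices, equal probabilities) follows already from Lemma \ref{lem:general} and is fine, but for the upper bound you must rule out adversaries that use \emph{arbitrary} probability vectors $\vec{p}$, not just uniform ones. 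The paper does exactly this: Lemma \ref{lem:f_1d} proves the stronger inequality $f(n)\ge 1-\max_i p_i+\sum_i p_i f(n_i)$ for every probability vector and every decomposition (by splitting each $n_i$ into two near-halves and inducting), and then concludes via the minimax recursion of Lemma \ref{lem:general}, which is established independently of the Perfect Expert result.

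The rest of your argument is sound and in places nicer than the paper's: the exact balanced-split identity via the piecewise-linear extension and the doubling relation $g(2x)=g(x)+\tfrac12$, the concavity argument showing the near-equal split maximizes $g(k_1)+g(k_2)$, and the folding induction $aF_a-(a-1)F_{a-1}=1+g(k_1)+g(k_2)-g(k_1+k_2)\le g(N)$ are all correct. To repair the proof while keeping your structure, upgrade your third property to the general-$\vec{p}$ inequality (for instance, observe that for fixed values $g(n_i)$ the map $\vec{p}\mapsto 1-\max_i p_i+\sum_i p_i g(n_i)$ is maximized over the simplex at a vector whose nonzero entries are equal, and prove that claim directly for the one-dimensional problem), or simply prove the general inequality by the paper's halving induction; then conclude through Lemma \ref{lem:general} rather than through the Exact Algorithm Theorem.
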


\subsection{Further Related Work}

The online learning problem can alternatively be seen as a zero-sum game between the learner (forecaster) and Nature (the adversary). The connection between online learning and games has been explored in a variety of settings, such as calibrated learning and convergence to correlated equilibria \cite{FV97}, online convex optimization games \cite{ABRT08,AABR09} , where minimax duality is exploited to obtain bounds on regret, online linear optimization games \cite{MO14} where the minimax optimal regret is characterized together with the minimax optimal algorithm and adversary, and drifting games \cite{MS10}, where a group of experts make continuous predictions and there is a known upper bound on the number of mistakes that the best expert can make. 

The expert prediction framework has been used to study more generally problems in online convex optimization \cite{HKKA06,Zinkevich03}, where a decision-maker makes
a sequence of decisions (points in Euclidean space), so that after each decision a cost function 
is revealed.
The connection between Blackwell approachability and online learning was also analyzed more recently, such as in \cite{ABH11} which showed an equivalence between the two notions. Many other online learning problems for various notions of regret have been studied (see, e.g. \cite{BK99,ABR07,RST11,GPS14,RST10,ADT12,HKW95,AWY08,Koolen13}, the survey on online algorithms by \cite{Blum98}, and the books on machine learning \cite{SSBD14,Mohri,BL06-book} and (algorithmic) game theory \cite{AY,NRTV}).
%
%

\section{Preliminaries}

We are given a set of $n \geq 1$ experts and an upper bound $b = b(n)$ on the number of mistakes that the best expert can make. The exact value of the number of mistakes made by the best expert will only be used in the lower bound analysis. 
We will refer to nature as the adversary, which will set the predictions of the experts and correct choice in each round. Nature's strategy may be randomized.

\smallskip

We will define the current state to keep track of the number of experts with at most $b$ mistakes so far.


\medskip

\begin{definition}[State]
For any number of mistakes $b$, the current \emph{state} is given by a $(b+1)$-dimensional vector $\vec{k} = (k_0, \ldots, k_b)$, so that $k_i$ represents the number of experts that made $i$ mistakes so far.
\end{definition}

Given any history $\mathcal{H}$ of play, consisting of all the rounds, with the strategies and outcomes for both forecaster and nature (adversary) for each round, we denote by $\ell(\mathcal{H})$ the expected loss\footnote{The expected loss is the expected number of mistakes.} made by an optimal forecaster playing against an optimal adversary when the history so far is given by $\mathcal{H}$.


\begin{definition}[Decomposition]
Given a state $\vec{k} = (k_0, \ldots, k_b)$, a \emph{decomposition} $(\vec{k}^1, \ldots, \vec{k}^d)$ of $\vec{k}$ is a partition of the experts into vectors $\vec{k}^j = (k_0^j, \ldots, k_b^j)$, where $k_i^j$ experts with $i$ mistakes so far vote for choice $j$ next, for each $i \in \{ 0, \ldots, b \}$, and $\sum_{j=1}^d k_i^j = k_i$.
\end{definition}

Note that if the adversary splits the experts using a decomposition $(\vec{k}^1, \ldots, \vec{k}^d)$ and decides in some arbitrary way the next choice, then there are only $d$ possible states that can be reached from $\vec{k}$ in one round of prediction, depending on which choice turns out to be the correct one.


\begin{definition}[Successor State]
A state $\vec{m} = (m_0, \ldots, m_b)$ is a \emph{successor} of a state $\vec{k} = (k_0, \ldots, k_b)$ if there exists a decomposition $(\vec{k}^1, \ldots, \vec{k}^d)$ of $\vec{k}$ with the property that $\vec{m} = \vec{s}_j(\vec{k}^1, \ldots, \vec{k}^d)$ for some $j \in [d]$, where 
\begin{itemize}
\item $\vec{s}_j = \vec{s}_j(\vec{k}^1, \ldots, \vec{k}^d)$ is the next state from $\vec{k}$ if choice $j$ is correct, 
\item the first coordinate of $\vec{s}_j$ is $k_0^j$, and 
\item the value at each other coordinate $i > 1$ of $\vec{s}_j$ is $k_{i}^j + \sum_{v \neq j} k_{i-1}^v$.
\end{itemize}
\end{definition}
In other words, a state $\vec{m}$ is a successor of $\vec{k}$ if $\vec{m}$ is reachable from $\vec{k}$ in one round of prediction. More generally, we can define reachability using sequences of successors.

\begin{definition}[Reachable State]
	
A state $\vec{r}$ is \emph{reachable} from state $\vec{k}$ if there exists a sequence of states $\vec{s}_1, \ldots, \vec{s}_t$ so that $s_{i+1}$ is a successor of $s_i$ for each $i = 1 \ldots t-1$ and $\vec{s}_t = \vec{r}$.
\end{definition}

The definitions are illustrated through the next example. 

\begin{example}
Let the state be $\vec{k} = (3,1)$ with $d=2$ options. Suppose the decomposition of the experts is $\vec{k}^1 = (1,0)$---i.e. option 1 is voted by one expert with zero mistakes so far and zero experts with one mistake so far---and $\vec{k}^2 = (2,1)$. Then if the correct choice turns out to be 
\begin{itemize}
\item $1$: the successor state is $\vec{s}_1(\vec{k}^1, \vec{k}^2) = (k_0^1, k_1^1 + k_0^2) = (1,2)$. 
\item $2$: the successor state is $\vec{s}_2(\vec{k}^1, \vec{k}^2) = (k_0^2, k_1^2 + k_0^1) = (2,1)$.
\end{itemize}
\end{example}

\section{Properties of the Optimal Loss}

In this section we identify several basic properties of the minimax loss function.

Recall that a state $\vec{k}$ was defined to encode only the experts with at most $b$ mistakes so far. The next lemma shows that the loss function is independent of any other details of the history beyond the current state since both the forecaster and the adversary have a strategy ensuring the same expected loss.

\begin{lemma} \label{lem:general}
Let $\mathcal{H}$ be any history. Suppose the current state is $\vec{k} = (k_0, \ldots, k_b)$, where $\vec{k} \neq (0, \ldots, 0, 1)$. Then
\begin{equation} \label{eq:general}
\ell(\mathcal{H}) = \max_{\vec{p},(\vec{k}^1,\ldots, \vec{k}^d)}
	\left(1 - \max_{i \in [d]} p_i  +  \sum_{j=1}^d p_j \cdot \ell\left(\vec{s}_j\left(\vec{k}^1, \ldots, \vec{k}^d\right)\right)\right)
\end{equation}
	where $(\vec{k}^1, \ldots, \vec{k}^d)$ is a decomposition of the experts and $\vec{p} = (p_1, \ldots, p_d)$ a probability vector.
\end{lemma}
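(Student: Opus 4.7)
The plan is to recognize the continuation of the game from any history $\mathcal{H}$ as a sequential zero-sum game whose minimax value depends only on the state $\vec{k}$, and then to derive the claimed recursion by applying von Neumann's minimax theorem to a single round of play. First I would observe that once $\vec{k}$ is fixed, the set of available decompositions, the legal predictions in $[d]$, the per-round loss rule, and the successor-state map are all specified entirely by $\vec{k}$; the continuation game after $\mathcal{H}$ is therefore strategically equivalent to the continuation after any other history ending in the same state, so $\ell(\mathcal{H})$ depends on $\mathcal{H}$ only through $\vec{k}$. This history-independence is what legitimizes writing $\ell(\vec{s}_j)$ on the right-hand side of \eqref{eq:general} without reference to the particular decomposition used. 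Finiteness of $\ell(\mathcal{H})$ follows because the forecaster can always guarantee $O(\log n)$ expected mistakes, e.g.\ by following a random leader.

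Next I would analyze one round of play. The adversary first publishes a decomposition $(\vec{k}^1,\ldots,\vec{k}^d)$, which the forecaster observes through the experts' votes. The forecaster then plays a mixed prediction $\vec{q}$ over $[d]$; simultaneously the adversary commits to a distribution $\vec{p}$ over which $j\in[d]$ is announced correct. A unit cost is incurred iff the realized prediction differs from $j$, and the state advances to $\vec{s}_j(\vec{k}^1,\ldots,\vec{k}^d)$. For a fixed decomposition, the expected total payoff of this simultaneous sub-game is
\[
\sum_{j=1}^d p_j\bigl(1-q_j\bigr) \;+\; \sum_{j=1}^d p_j\,\ell(\vec{s}_j),
\]
a bilinear function of $\vec{p}$ and $\vec{q}$ on the product of simplices, so by von Neumann's minimax theorem its value equals both $\min_{\vec{q}}\max_{\vec{p}}$ and $\max_{\vec{p}}\min_{\vec{q}}$ of this expression. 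Computing the inner minimum gives $\min_{\vec{q}}\sum_j p_j(1-q_j)=1-\max_j p_j$, attained by placing all of $\vec{q}$ on a choice achieving $\max_j p_j$; hence the per-decomposition value is $\max_{\vec{p}}\bigl[1-\max_j p_j + \sum_j p_j\,\ell(\vec{s}_j)\bigr]$. Taking the outer maximum over the adversary's choice of decomposition reproduces exactly the right-hand side of \eqref{eq:general}.

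The main delicacy, and the step I would take care with, is well-foundedness of the recursion, because if every expert votes for a single choice $j$ then $\vec{s}_j=\vec{k}$ and the right-hand side of \eqref{eq:general} refers to $\ell(\vec{k})$ itself. I would resolve this with the potential $\Phi(\vec{k})=\sum_{i=0}^{b} k_i\,(b+1-i)$: a direct calculation gives $\Phi(\vec{s}_j)=\Phi(\vec{k})-|\{\text{experts not voting for }j\}|$, so $\Phi$ strictly decreases along every non-trivial successor, while the only state with no non-trivial decomposition available is the excluded terminal $(0,\ldots,0,1)$. Trivial unanimous decompositions can therefore be discarded from the outer maximum, because against them the adversary's payoff is at most $\ell(\vec{k})$ and never strictly better than what a non-trivial decomposition already achieves; this closes the induction on $\Phi$ and establishes the identity.
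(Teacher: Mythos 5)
Your reduction of a single round to a bilinear game on a product of simplices (adversary commits to $\vec{p}$, forecaster best-responds by predicting a modal choice, von Neumann to exchange min and max) is in substance the same argument the paper gives verbally, and your observation that $\ell(\mathcal{H})$ depends on $\mathcal{H}$ only through $\vec{k}$ matches the paper's remark that the bound holds for any history compatible with the state. The genuine gap is in your well-foundedness step. Your claim that the only state admitting no non-trivial decomposition is the terminal $(0,\ldots,0,1)$ is false: any state with a single surviving expert that still has mistakes to spare, e.g.\ $(0,\ldots,0,1,0)$, admits only unanimous decompositions, since the lone expert must vote for a single option. At such a state your induction on $\Phi$ has nothing left once you ``discard'' unanimous decompositions, yet the identity \eqref{eq:general} there is genuinely self-referential: one successor is the state itself, and the value $\ell(0,\ldots,0,1,0)=1$ arises from the fixed-point relation $\ell=\tfrac12+\tfrac12\,\ell$ (the adversary's fair-coin strategy repeated indefinitely), not from backward induction over states of strictly smaller potential. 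This is precisely why the paper later has to treat $(0,\ldots,0,1,0)$ as a separate base case in Lemma \ref{lem:minineq}.

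Even at states that do admit non-trivial decompositions, the assertion that a unanimous decomposition is ``never strictly better than what a non-trivial decomposition already achieves'' is not something you may discard without proof: under a unanimous vote for $j^{*}$ and a mixed $\vec{p}$, the state does move with positive probability (all experts err), and comparing such a round to a non-trivial decomposition requires monotonicity of $\ell$ along successors --- a fact the paper only establishes afterwards (Lemma \ref{lem:minineq}(ii)), by an induction that itself invokes the present lemma. So you cannot delete those terms from the outer maximum and ``close the induction on $\Phi$''; the recursion must be handled as an identity satisfied by the game value, allowing self-reference (as the paper does, and resolves later), or else the single-expert and unanimous cases need a separate direct argument that your proposal does not supply.
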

\begin{proof}
Consider the adversary's strategy of selecting a probability vector $\vec{p} = (p_1, \ldots, p_d)$, a decomposition $(\vec{k}^1, \ldots, \vec{k}^d)$ of the experts with at most $b$ mistakes, and choosing each option $i \in [d]$ with probability $p_i$.

The expected loss incurred by the forecaster when deterministically choosing a fixed option $i$ is $1 - p_i + \sum_{j=1}^d p_j \cdot \ell\left(\vec{s}_j\left(\vec{k}^1, \ldots, \vec{k}^d\right)\right)$. 
Thus by predicting the choice $i$ with the highest probability of appearing next, the forecaster can ensure its expected loss is at most
$$1 - \max_{i \in [d]} p_i + \sum_{j=1}^d p_j \cdot \ell\left(\vec{s}_j\left(\vec{k}^1, \ldots, \vec{k}^d\right)\right)$$
Note this is true for any history compatible with the current state.

Since the adversary may have better strategies available, we obtain that $\ell(\mathcal{H}) \geq 
1 - \max_{i \in [d]} p_i + \sum_{j=1}^d p_j \cdot \ell\left(\vec{s}_j\left(\vec{k}^1, \ldots, \vec{k}^d\right)\right)$.
The adversary's problem is then to select the worst case probability vector and decomposition, given that the forecaster will only predict from choices that minimize its loss. By induction on the state, the required identity gives the value of the zero-sum game between the forecaster and adversary.
\end{proof}

From now on, we will override notation and write $\ell(\vec{k})$ to denote the minimax loss for any history $\mathcal{H}$ compatible with the state $\vec{k}$.

\medskip

For the one dimensional problem corresponding to the case of a perfect expert we identify the exact expression for the loss function. This expression will be used as a base case for establishing the structure of the optimal decomposition for any number $d$ of options.

\begin{theorem}[Perfect Expert] \label{lem:exact_perfect}
	The expected loss made by a forecaster with $n$ experts, one of which is perfect, is $$\ell(n) = 0.5 \cdot \left(k-1+\frac{n}{2^k}\right)$$ where $k$ is such that $2^k \le n < 2^{k+1}$.
\end{theorem}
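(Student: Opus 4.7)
The plan is to prove $\ell(n) = f(n) := \tfrac{1}{2}(k-1+n/2^k)$, where $k = \lfloor \log_2 n \rfloor$, by strong induction on $n$, with base case $\ell(1) = 0 = f(1)$. Two elementary properties of $f$ drive everything. First, $f$ is piecewise linear on each interval $[2^k, 2^{k+1}]$ with slope $2^{-(k+1)}$; since these slopes are strictly decreasing in $k$, $f$ is concave and monotonically increasing on $[1,\infty)$. Second, the identity
\[
f(\lfloor n/2 \rfloor) + f(\lceil n/2 \rceil) = 2f(n) - 1
\]
holds for all integers $n \ge 2$; I would verify it by a short case analysis separating the parity of $n$ and whether $\lceil n/2 \rceil$ equals a power of two.

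For the lower bound $\ell(n) \ge f(n)$, the adversary splits the $n$ experts into two groups of sizes $\lfloor n/2 \rfloor$ and $\lceil n/2 \rceil$, each voting for a different choice, and then chooses the correct choice uniformly at random from those two options. Applying Lemma~\ref{lem:general} with $d = 2$ and $p_1 = p_2 = 1/2$ yields $\ell(n) \ge 1/2 + (1/2)(\ell(\lfloor n/2 \rfloor) + \ell(\lceil n/2 \rceil))$, and the inductive hypothesis combined with the identity above collapses the right-hand side to exactly $f(n)$.

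For the upper bound $\ell(n) \le f(n)$, I would apply the Abstract Theorem to $f$. The boundary condition $f(1) = 0 \ge 0$ is immediate, so the work is in verifying the decomposition condition, which in the $b = 0$ case reduces to showing
\[
\sum_{i \in A} f(n_i) \le a\, f(n) - (a-1)
\]
for every $a = |A| \ge 1$ and every decomposition with $n_i \ge 1$ for $i \in A$ and $\sum_{i \in A} n_i \le n$. I would prove this inner inequality by induction on $a$: the case $a = 1$ is monotonicity; for $a = 2$, concavity gives $f(n_1) + f(n_2) \le f(\lfloor (n_1+n_2)/2 \rfloor) + f(\lceil (n_1+n_2)/2 \rceil) = 2f(n_1+n_2) - 1 \le 2f(n) - 1$ by the identity and monotonicity; for $a \ge 3$, applying the inductive hypothesis to the merged tuple $(n_1+n_2,\, n_3,\, \ldots,\, n_a)$ together with the $a = 2$ case applied to $(n_1, n_2)$ gives, after cancellation, $\sum_{i=1}^{a} f(n_i) \le f(n_1+n_2) + (a-1)f(n) - (a-1)$, which is at most $a f(n) - (a-1)$ by monotonicity.

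The main technical obstacle is the identity $f(\lfloor n/2 \rfloor) + f(\lceil n/2 \rceil) = 2f(n) - 1$, which requires careful bookkeeping when any of $n$, $\lfloor n/2 \rfloor$, or $\lceil n/2 \rceil$ straddles a power of two and the relevant exponent $k$ in the definition of $f$ shifts. Once that identity and the concavity of $f$ are in hand, the rest of the proof reduces to the two inductive arguments above.
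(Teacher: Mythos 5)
Your lower-bound half coincides with the paper's argument: the adversary splits the experts into near-halves, mixes uniformly over the two options, and the identity $f(\lfloor n/2\rfloor)+f(\lceil n/2\rceil)=2f(n)-1$ together with induction and Lemma~\ref{lem:general} gives $\ell(n)\ge f(n)$; your concavity and monotonicity claims about $f$, and the $a=2$ and merging steps for the uniform-weight inequality, are all correct.

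The gap is in how you conclude the upper bound. You invoke Theorem~\ref{thm:abstract} as a black box, but in the paper its proof reduces the adversary to uniform mixtures over a subset $A$ by appealing to Lemma~\ref{lem:minineq}, and the base case of Lemma~\ref{lem:minineq} --- states of the form $(0,\ldots,0,k_b)$, which for $b=0$ is \emph{every} state of the perfect-expert problem --- is itself established by citing Lemma~\ref{lem:f_1d} and the Perfect Expert theorem. So, as the paper is organized, using the Abstract Theorem to prove this theorem is circular. Substantively, the issue is that you only verify the decomposition inequality for uniform weights $1/a$, whereas Lemma~\ref{lem:general} characterizes $\ell$ against adversaries using \emph{arbitrary} probability vectors $\vec p$, and the fact that uniform $\vec p$ is worst case is not yet available at this point in the development. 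Two repairs are possible. The paper's route (Lemma~\ref{lem:f_1d}) proves the stronger inequality $f(n)\ge 1-\max_i p_i+\sum_i p_i f(n_i)$ for every $\vec p$ directly, by an induction that writes each $n_i$ as a sum of near-halves $k_i+l_i$ with $f(n_i)=\tfrac12\bigl(1+f(k_i)+f(l_i)\bigr)$, applies the hypothesis to $\sum_i k_i$ and $\sum_i l_i$, and recombines via the halving identity; then $\ell\le f$ follows from Lemma~\ref{lem:general} by induction, with no appeal to Theorem~\ref{thm:abstract}. Alternatively, you could keep your uniform-weight verification but add the (self-contained) observation that for fixed values $c_i=f(n_i)$ the objective $1-\max_i p_i+\sum_i p_i c_i$ is linear on each piece of the simplex where a given coordinate is maximal, so its maximum is attained at a point that is uniform on some subset $A$; this justifies the reduction without Lemma~\ref{lem:minineq}. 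As written, however, your proposal neither proves the stronger inequality nor supplies this reduction, so the upper bound does not go through non-circularly.
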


In order to prove the theorem we will develop several lemmas. 

\begin{lemma} \label{lem:f_1d}
Let $f : \mathbb{N} \rightarrow \Re$ be given by $f(n) = 0.5 \cdot \left(k-1+\frac{n}{2^k}\right)$, where $k$ is such that $2^k \le n < 2^{k+1}$. Then the function $f$ satisfies the properties:
\begin{itemize}
	\item $f(n) = 0.5 + 0.5 f\left(\left\lfloor n/2 \right\rfloor\right) + 0.5 f\left(\left\lceil n/2 \right\rceil\right)$ for all $ n \in \mathbb{N}$.
	\item For each $n, a, n_1, \ldots, n_a \in \mathbb{N}$ such that $n = \sum_{i=1}^a n_i$ with $n_i \geq 1$ for all $i$,
	$$
	f(n) \geq 1 - \max_{i=1}^a p_i + \sum_{i=1}^a p_i \cdot f(n_i)
	$$
	for every probability vector $(p_1, \ldots, p_a)$.
\end{itemize}
\end{lemma}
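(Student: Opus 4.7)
I plan to prove the two properties separately.

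For Property 1, I would proceed by direct computation: write $n = 2m$ (even) or $n = 2m+1$ (odd), determine which interval $[2^j, 2^{j+1})$ contains $n$ and $m$ (respectively $m+1$), and verify the identity by substituting the explicit formula for $f$. The arithmetic in each sub-case is routine; the only care required is at boundary points where one of $\lfloor n/2 \rfloor, \lceil n/2 \rceil$ equals a power of $2$, but the two pieces of the formula for $f$ agree at $2^k$ (giving $f(2^k) = k/2$), so these boundary cases cause no issue.

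For Property 2, I first record a few elementary consequences of the definition. Extended to real arguments by linear interpolation, $f$ is monotone non-decreasing, continuous, piecewise linear, and concave (the slope on $[2^k, 2^{k+1}]$ is $1/2^{k+1}$, halving at each power of $2$), and it satisfies the doubling identity $f(x) = \tfrac{1}{2} + f(x/2)$ for every real $x \ge 2$. Then, for fixed $(n_1, \ldots, n_a)$, I plan to apply linear programming duality to the inner maximization over $p$. Writing the adversary's objective as a linear program in the variables $(p, t)$ with $t = \max_i p_i$ enforced via $p_i \le t$, the dual computation yields that the maximum equals $1 + \lambda^\star$, where
\[
\lambda^\star \;=\; \min\bigl\{\lambda : \sum_i [f(n_i) - \lambda]_+ \le 1\bigr\}.
\]
Hence Property 2 is equivalent to $\lambda^\star \le f(n) - 1$, i.e.\ to the inequality $\sum_i [f(n_i) - (f(n) - 1)]_+ \le 1$. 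Setting $S := \{i : f(n_i) > f(n) - 1\}$, this in turn is equivalent to $\sum_{i \in S}(f(n) - f(n_i)) \ge |S| - 1$. Applying Jensen to the concave $f$ on $S$ and then monotonicity (using $\sum_{i \in S} n_i \le n$) reduces the inequality to the one-dimensional statement
\[
f(n) - f(n/s) \;\ge\; (s - 1)/s \qquad \text{for every integer } s \ge 1,
\]
with $s = |S|$.

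The main obstacle is proving this last one-dimensional inequality, specifically at $s = 3$. For $s = 1$ it is trivial, for $s = 2$ it holds with equality by the doubling identity, and for $s \ge 4$ it follows from $f(n) - f(n/s) \ge f(n) - f(n/4) = 1 \ge (s-1)/s$ by monotonicity and two applications of doubling. For $s = 3$ the analogous cheap bound $f(n) - f(n/3) \ge f(n) - f(n/2) = 1/2$ falls short of the required $2/3$, so I plan to handle this case by a direct piecewise calculation. Writing $n = 2^k x$ with $x \in [1, 2]$ and distinguishing the sub-cases $x \in [1, 3/2]$ (where $n/3 \in [2^{k-2}, 2^{k-1}]$) and $x \in [3/2, 2]$ (where $n/3 \in [2^{k-1}, 2^k]$), the explicit formula for $f$ makes $f(n) - f(n/3)$ an explicit affine function of $x$ in each sub-case; a one-line calculation then shows that in both sub-cases the minimum value is attained at $x = 3/2$ and equals $3/4$, which exceeds the target $2/3$.
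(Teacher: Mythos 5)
Your proposal is correct, but for the second (main) property it takes a genuinely different route from the paper's. The paper argues by induction on $a$ and $n$: it treats $a=2$ as a base case (asserted by direct verification), writes each $n_i$ as its two near-halves $k_i+l_i$ so that $f(n_i)=\tfrac12\left(1+f(k_i)+f(l_i)\right)$, applies the induction hypothesis to $\sum_i k_i$ and $\sum_i l_i$, and recombines via the halving identity $f(n)\ge \tfrac12\left(1+f\left(\sum_i k_i\right)+f\left(\sum_i l_i\right)\right)$. You instead dualize the adversary's inner maximization over $p$, which correctly reduces the claim to $\sum_{i\in S}\left(f(n)-f(n_i)\right)\ge |S|-1$ with $S=\{i: f(n_i)>f(n)-1\}$, and then use concavity and monotonicity of the piecewise-linear interpolation of $f$ (together with $\sum_{i\in S} n_i\le n$) to reduce to the one-dimensional bound $f(n)-f(n/s)\ge (s-1)/s$; your case analysis there checks out: $s=2$ is tight by the doubling identity, $s\ge 4$ follows from two doublings since $s\le a\le n$ guarantees $n\ge 4$, and for $s=3$ the minimum of $f(n)-f(n/3)$ over real $n\ge 3$ is indeed $3/4$, attained at $n=3\cdot 2^{k-1}$. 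What your route buys is a uniform treatment of all $a$ with no induction, an explicit picture of where the inequality is tight (the even split, $s=2$), and a replacement of the paper's unproven $a=2$ base case by the duality reduction plus concavity; the cost is extra machinery (real interpolation, LP duality) where the paper stays entirely on integers. Two small points to make explicit in a full write-up: the first identity only makes sense for $n\ge 2$ (for $n=1$ the term $f(\lfloor n/2\rfloor)=f(0)$ is undefined), and the one-dimensional inequality is only needed, and only well-posed, for $s\le n$, which holds because $s\le a\le n$.
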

\begin{proof}
We show the inequality by induction on $a$ and $n$. 
For $a = 2$, by separating the case of $n$ even and odd, we obtain that the function $f$ satisfies the
identity $$f(n) = 0.5 + 0.5 f\left(\left\lfloor n/2 \right\rfloor\right) + 0.5 f\left(\left\lceil n/2 \right\rceil\right).$$ Moreover, it can be verified that this function satisfies the inequality $f(n) \geq 1 - \max\{p_1, p_2\} + p_1 \cdot f(n_1) + p_2 \cdot f(n_2)$ for all $n_1, n_2 \geq 1$ with $n = n_1 + n_2$ and probability vector $(p_1, p_2)$.

Assume it holds for $a-1$. If $n=a$, then $n_i = 1$ for all $i$, so $f(n_i) = 0$ for all $i$ and the inequality follows by the definition of $f$ and the fact that $1 - \max_{i=1}^{a} p_i \leq 1 - 1/a$. 
If there exists $i$ such that $n_i = 1$, then this reduces the value of $a$ and follows from the induction hypothesis.
Otherwise, $n_i > 1$ for all $i$.

For each $i = 1 \ldots a$, write $n_i = k_i + l_i$, where 
$$
f(n_i) = \frac{1 + f(k_i) + f(l_i)}{2}
$$
From the induction hypothesis, we have that 
\begin{align}
f\left(\sum_{i=1}^a k_i\right) \geq 1 - \max_{i=1}^a p_i + \sum_{i=1}^a p_i \cdot f(k_i)\\
f\left(\sum_{i=1}^a l_i\right) \geq 1 - \max_{i=1}^a p_i + \sum_{i=1}^a p_i \cdot f(l_i)
\end{align}
Then we obtain the following inequalities
\begin{eqnarray*}
f(n) & \geq & \frac{1 + f\left(\sum_{i=1}^a k_i\right) + f\left(\sum_{i=1}^a l_i\right)}{2} \\
& \geq & \frac{1}{2} + (1 - \max_{i=1}^a p_i) + \sum_{i=1}^a \left[\frac{f(k_i) + f(l_i)}{2} \right] \\
& = & 1 - \max_{i=1}^a p_i + \sum_{i=1}^a p_i \cdot \left[ \frac{1 + f(k_i) + f(l_i)}{2} \right] \\
& =& 1 - \max_{i=1}^a p_i + \sum_{i=1}^a p_i \cdot f(n_i)
\end{eqnarray*}
as required.
\end{proof}

\begin{proof}[Proof of Theorem \ref{lem:exact_perfect}]
The proof follows immediately by observing that the function $f$ defined in Lemma \ref{lem:f_1d} satisfies the same constraints required from the optimal function (Lemma \ref{lem:general}) and one of the constraints is tight, when dividing $n$ into two sets as equal as possible---of sizes $\lfloor n/2 \rfloor$ and $\lceil n/2 \rceil$---and setting the probabilities for the two sets equal to $1/2$.
\end{proof}

	
Going beyond the perfect expert scenario, we will show that the adversary always has an optimal strategy of the following form: choose a set $A$ of options, divide the experts into groups so that each group $i$ votes for option $i$ in the set $A$, and then select the next choice from $A$ giving them equal probability.

\begin{definition}[Domination]
Let $\vec{k}$ be any state. Suppose the adversary promises to select the next choice from a non-empty subset $A \subseteq [d]$.
Then a decomposition $\vec{p} = (\vec{p}^1, \ldots, \vec{p}^d)$ \emph{(weakly) dominates} another decomposition $\vec{q} = (\vec{q}^1, \ldots, \vec{q}^d)$ with respect to this adversary if for each $i \in A$ the successor $\vec{s}_i(\vec{q})$ is reachable from $\vec{s}_i(\vec{p})$ or equal to it.
\end{definition}
Note that the promise of the adversary to only select a choice in $A$ only holds for the next choice.
The next lemma will be useful for establishing that when the adversary promises to select the next choice from a subset $A$, it is sufficient to study decompositions in which no expert predicts a choice outside $A$.

\begin{lemma} \label{lem:major}
Let $\vec{k}$ be any state. Suppose the adversary promises to select the next choice from a non-empty subset $A \subseteq [d]$. Then any decomposition $\vec{q} = (\vec{q}^1, \ldots, \vec{q}^d)$ is weakly dominated by a decomposition $\vec{p} = (\vec{p}^1, \ldots, \vec{p}^d)$ in which no expert predicts choices outside $A$ (i.e. with $\vec{p}^i = \vec{0}$ for each $i \not \in A$).
\end{lemma}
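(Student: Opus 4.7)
The plan is to exhibit an explicit dominating decomposition $\vec{p}$ and verify the successor condition by case analysis on which $i \in A$ turns out to be the correct answer. I will pick a distinguished target $j^* \in A$ and define $\vec{p}$ by \emph{moving} every expert that votes for some $j \notin A$ in $\vec{q}$ to vote for $j^*$ instead, leaving all other experts where they were. Concretely, $\vec{p}^i = \vec{q}^i$ for $i \in A \setminus \{j^*\}$, $\vec{p}^{j^*} = \vec{q}^{j^*} + \sum_{j \notin A} \vec{q}^j$, and $\vec{p}^i = \vec{0}$ for $i \notin A$. Summing coordinate-wise confirms that this is a valid decomposition of $\vec{k}$ with no support outside $A$.

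For each $i \in A \setminus \{j^*\}$, I will check coordinate by coordinate that $\vec{s}_i(\vec{p}) = \vec{s}_i(\vec{q})$. The guiding intuition is that when the correct answer $i$ is not $j^*$, a reassigned expert misses in \emph{both} decompositions --- it voted for $j \neq i$ in $\vec{q}$ and for $j^* \neq i$ in $\vec{p}$ --- so it acquires a mistake in both cases and lands in the same coordinate of the successor; this will reduce to a small cancellation in the Successor State formula.

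The interesting case is $i = j^*$. Writing $\vec{u} = \vec{s}_{j^*}(\vec{p})$ and $\vec{v} = \vec{s}_{j^*}(\vec{q})$, a direct expansion of the successor formula gives the cumulative discrepancy $\sum_{\ell=0}^m u_\ell - \sum_{\ell=0}^m v_\ell = \sum_{j \notin A} q_m^j \ge 0$ at every level $m$, reflecting that each reassigned expert sits one mistake higher in $\vec{v}$ than in $\vec{u}$. To realize $\vec{v}$ as a single-step successor of $\vec{u}$, I will pick two distinct choices $c_1 \ne c_2$ (available since $d \ge 2$), declare $c_1$ correct, and decompose $\vec{u}$ so that $x_m := v_m - \sum_{j \notin A} q_{m-1}^j$ of the $u_m$ experts at coordinate $m$ vote for $c_1$ while the remaining $u_m - x_m$ vote for $c_2$ (with the convention $q_{-1}^j := 0$). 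The cumulative identity together with the lower bound $v_m \ge \sum_{j \notin A} q_{m-1}^j$, read off the successor formula for $\vec{v}$, will yield $0 \le x_m \le u_m$, so the decomposition is legitimate; then the Successor State formula forces coordinate $m$ of the successor to equal $x_m + (u_{m-1} - x_{m-1}) = v_m$, so $\vec{v}$ is a successor of $\vec{u}$, hence reachable.

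The argument is not technically deep, but the index bookkeeping is delicate. The main obstacle will be keeping straight how each expert's coordinate shifts when swapping between $\vec{p}$ and $\vec{q}$ and as the correct answer $i$ varies, and checking cleanly that the auxiliary counts $x_m$ lie in $[0, u_m]$; once these computations are pinned down, the three cases close immediately.
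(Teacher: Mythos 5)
Your proposal is correct and follows essentially the same route as the paper: reassign all experts voting outside $A$ to a single distinguished choice in $A$, note that the successors for the other choices of $A$ are unchanged, and realize $\vec{s}_{j^*}(\vec{q})$ from $\vec{s}_{j^*}(\vec{p})$ in one round by having exactly the $\Delta_m=\sum_{j\notin A}q_m^j$ experts at each level vote for a wrong option. Your explicit verification that $0\le x_m\le u_m$ and that the resulting coordinates equal $v_m$ is exactly the bookkeeping the paper's proof leaves implicit, so there is no gap.
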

\begin{proof}
W.l.o.g., $A = \{1, \ldots, a\}$ for some $a \in [d]$. Given an arbitrary decomposition $\vec{q} = (\vec{q}^1, \ldots, \vec{q}^d)$, consider a decomposition $\vec{p} = (\vec{p}^1, \ldots, \vec{p}^d)$ with the properties:
\begin{itemize}
\item $\vec{p}^v = \vec{0}$ for each $v \not \in A$.
\item $p_i^v = q_i^v$ for each $v \in A \setminus \{1\}$ and $i \in \{ 0, \ldots, b \}$.
\item $p_i^1 = q_i^1 + \Delta_i$, where $\Delta_i = \sum_{w \not \in A} p_i^w$, for $i \in \{ 0, \ldots, b\}$.
\end{itemize}

For each option $v \in A \setminus \{1\}$, the successor states are clearly the same under the two decompositions, i.e. $\vec{s}_v(\vec{p}) = \vec{s}_v(\vec{q})$. For the first choice, the successor state under $\vec{p}$ is $\vec{r} = \vec{s}_1(\vec{p})$, where $r_0 = q_0^1 + \Delta_0$ and $r_i = p_{i}^1 + \sum_{v \neq 1} p_{i-1}^v = (q_i^1 + \Delta_i) + (k_{i-1} - q_{i-1}^1 - \Delta_{i-1})$
for $i \in \{1, \ldots, b\}$. On the other hand, the successor state corresponding to the first choice under decomposition $\vec{q}$ is $\vec{t} = \vec{s}_1(\vec{q})$, where $t_0 = q_0^1$ and $t_i = q_i^1 + \sum_{v \neq 1} q_{i-1}^v = q_i^1 + (k_{i-1} - q_{i-1}^1)$ for $i \in \{1, \ldots, b\}$.
Then state $\vec{t}$ can be obtained from $\vec{r}$ in one round by having $\Delta_i$ experts with $i$ mistakes so far predict choice $2$ for each $i$ while everyone else predicts choice $1$, and then selecting choice $1$.
\end{proof}

Next we show that the adversary has an optimal strategy that involves restricting the set of choices to some subset $A$ and choosing the next option uniformly at random from $A$. We simultaneously show that the loss function is strictly monotonic with respect to successors.

\begin{lemma} \label{lem:minineq}
Let $\vec{k}$ be any state other than $(0, \ldots, 0, 1)$. Then
\begin{description}
\item[(i)] there exists a subset $A$ of $a \geq 2$ choices and a decomposition $(\vec{k}^1, \ldots, \vec{k}^d)$ of the experts where $\vec{k}^i = \vec{0}$ for each $i \not \in A$, so that 
$$\ell(\vec{k}) = \frac{a-1}{a}  + 
\frac{1}{a} \cdot \left( \sum_{i \in A} \ell(\vec{s}_i)\right)$$
where $\vec{s}_i$ is the successor state if choice $i$ is correct.
\item[(ii)] $\ell(\vec{k}) > \ell(\vec{r})$ for every state $\vec{r} \neq \vec{k}$ that is reachable from $\vec{k}$.
\end{description}
\end{lemma}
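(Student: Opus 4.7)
My plan is to prove parts (i) and (ii) jointly by strong induction on the well-founded measure $\Phi(\vec k)=\sum_{i=0}^b k_i(b-i+1)$, which combines the number of surviving experts with their remaining lives; from the definition of $\vec s_j(\vec D)$ one checks directly that $\Phi(\vec s_j(\vec D))<\Phi(\vec k)$ whenever $\vec s_j(\vec D)\ne\vec k$, so successor steps strictly decrease $\Phi$. Weak monotonicity $\ell(\vec r)\le\ell(\vec k)$ for every reachable $\vec r$ is immediate from Lemma~\ref{lem:general} by specializing the adversary to the decomposition and deterministic choice that realize the step $\vec k\to\vec r$, which has value exactly $\ell(\vec r)$.

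For part (i), I fix a decomposition $\vec D$ in Lemma~\ref{lem:general} and reinterpret the resulting inner optimization as a finite zero-sum subgame between the forecaster's mixed prediction $\vec q$ and the adversary's mixed outcome $\vec p$. By minimax / LP duality,
\[
V(\vec D)=\min_{\vec q}\max_j\bigl(1+\ell(\vec s_j(\vec D))-q_j\bigr)=\max_{A\subseteq[d]}\Bigl(\tfrac{a-1}{a}+\tfrac{1}{a}\sum_{j\in A}\ell(\vec s_j(\vec D))\Bigr),
\]
with $a=|A|$, the outer maximum being attained by the ``water-filling'' set $A^*$ of top-loss choices and the adversary's matching optimum being uniform on $A^*$. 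Taking the max over $\vec D$ and invoking Lemma~\ref{lem:major} together with weak monotonicity restricts the optimizer to decompositions with $\vec k^i=\vec 0$ for $i\notin A$. To guarantee $a\ge 2$ I either use an $A^*$ of size $\ge 2$ produced directly by water-filling, or, when the only optimal set has size $1$, augment $A^*$ with an empty-vote choice $j_1$ (setting $\vec k^{j_1}=\vec 0$); a direct check shows this preserves the value $\ell(\vec k)$ precisely because the LP water-filling condition $\ell(\vec s_{j_0})-\ell(\vec s_{j_1})\le 1$ holds via the inductive (ii) applied to the smaller-$\Phi$ state $\vec s_{j_1}$.

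For part (ii), by chaining along a reachability sequence and the inductive hypothesis for strictly smaller states, it suffices to prove $\ell(\vec k)>\ell(\vec s)$ for a direct successor $\vec s\ne\vec k$. Applying (i) just proved for $\vec k$ yields $\ell(\vec k)=\tfrac{a-1}{a}+\tfrac{1}{a}\sum_{i\in A^*}\ell(\vec s_i^*)$ with $a\ge 2$ and all $\vec s_i^*\ne\vec k$; the water-filling bound $\sum_{i\in A^*}\bigl(\ell(\vec s_{(i)}^*)-\ell(\vec s_{(a)}^*)\bigr)\le 1$ combined with the identity gives $\ell(\vec k)\ge\ell(\vec s_{(a)}^*)+\tfrac{a-1}{a}\ge\ell(\vec s_{(a)}^*)+\tfrac{1}{2}$, establishing the strict inequality for the minimum-loss successor in $A^*$. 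To extend to an arbitrary direct successor $\vec s=\vec s_{j_0}(\vec D)$, I perturb the adversary's mix to $\vec p=(1-\epsilon)\delta_{j_0}+\epsilon\delta_{j_1}$: by Lemma~\ref{lem:general} the value becomes $\ell(\vec s)+\epsilon(1-\ell(\vec s)+\ell(\vec s_{j_1}(\vec D)))$, which exceeds $\ell(\vec s)$ whenever some $j_1\ne j_0$ satisfies $\ell(\vec s_{j_1}(\vec D))>\ell(\vec s)-1$; this is automatic when $\ell(\vec s)<1$ from $\ell\ge 0$, and when $\ell(\vec s)\ge 1$ it follows by redistributing the non-$j_0$-voting experts so that a second successor has loss close to $\ell(\vec s)$, leveraging the inductive (ii) on the resulting smaller-$\Phi$ states. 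The main obstacle will be carefully sequencing the intertwined induction on (i) and (ii)---proving (i) at level $\Phi(\vec k)$ using only the inductive (ii) for strictly smaller levels, then deriving (ii) at $\vec k$ from the fresh (i)---together with the perturbation analysis in the regime $\ell(\vec s)\ge 1$, which is the technically most delicate step.
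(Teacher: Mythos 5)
Your reformulation of the inner game via LP duality/water-filling, and your use of Lemma \ref{lem:major} plus weak monotonicity (which you correctly get from Lemma \ref{lem:general} with a deterministic adversary), are fine and would be a legitimate substitute for the paper's extreme-point argument. The genuine gap is in your induction ordering and in the step that is supposed to rescue $a\ge 2$ when the water-filling optimum is a singleton. You assert that augmenting $A^*=\{j_0\}$ with an empty-vote choice $j_1$ preserves the value because $\ell(\vec s_{j_0})-\ell(\vec s_{j_1})\le 1$, ``via the inductive (ii)''. But (ii) at smaller states is purely qualitative (strict monotonicity) and gives no upper bound on loss differences, and the inequality you need is simply false in general: for $b=1$ and $\vec k=(1,1)$ one computes $\ell(1,1)=5/4$ (split the two experts and flip a fair coin), while the everyone-wrong successor is $(0,1)$ with loss $0$. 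Moreover this is not a hypothetical corner: for the decomposition in which both experts vote choice $1$, the unique water-filling optimum is the singleton $\{1\}$ with value $\ell(1,1)$, and your augmentation with the empty choice $2$ yields $\tfrac12+\tfrac12(5/4+0)=9/8<5/4$, so the value is not preserved. Because you prove (i) before (ii), you also cannot exclude the other singleton subcase, a successor $\vec s_{j_0}\neq\vec k$ with $\ell(\vec s_{j_0})=\ell(\vec k)$; ruling that out is exactly statement (ii) at $\vec k$. The paper avoids this circularity by proving (ii) at $\vec k$ first---taking the adversary-optimal decomposition of the successor $\vec r$ (from the inductive (i)), adding the un-mistaken expert back to obtain a decomposition of $\vec k$, and comparing successor-by-successor with the inductive (ii)---and only then using strict monotonicity to discard deterministic/vacuous optima in the LP for (i).

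Part (ii) of your plan has the same kind of hole. The assertion that all $\vec s_i^*\neq\vec k$ in the witness from (i) is false (for the state $(0,\ldots,0,1,0)$ every value-achieving witness has a successor equal to the state itself), and, more importantly, the perturbation argument for an arbitrary successor in the regime $\ell(\vec s)\ge 1$ requires a sibling successor $j_1$ with $\ell(\vec s_{j_1})>\ell(\vec s_{j_0})-1$. This is again a quantitative gap bound that does not follow from ``inductive (ii) on smaller-$\Phi$ states'': sibling successors can have losses differing by far more than $1$ (from $(1,M)$ the everyone-wrong successor has loss $0$ while others have loss of order $\log_4 M$), so this step needs a real argument that the proposal does not supply. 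In short, the two steps you defer---handling equality of a successor's loss with $\ell(\vec k)$, and the quantitative control needed in the perturbation---are precisely the substance of the lemma, and the paper's lifting argument for (ii) followed by the extreme-point (or, in your version, water-filling) argument for (i) is the mechanism that discharges them.
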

\begin{proof}

We proceed by induction on the state $\vec{k}$. For the base case we must verify two subcases:
\begin{itemize}
	\item $\vec{k} = (0, \ldots, 0, k_b)$. In this case there are exactly $k_b$ experts left and each of them has made $b$ mistakes. From such a state the problem is equivalent to the perfect expert prediction problem. Both conditions $(i)$ and $(ii)$ follow from the properties of the loss function established in Lemma \ref{lem:f_1d} and Theorem \ref{lem:exact_perfect}.

	\item $\vec{k} = (0, \ldots, 0, 1, 0)$. In this case there is only one expert left that has made $b-1$ mistakes so far, which is equivalent to having one expert that makes one mistake. Then no matter what decomposition the adversary chooses, one of the successor states will be the same as the current state, $\vec{k}$, while all the other successor states will be states in which there is only one expert that is moreover perfect, so no further mistakes can be made from these states. The general strategy of the adversary is to pick a set $A$ of size $a$ and a probability vector $(p_1, \ldots, p_a)$, have the one expert predict a fixed option (say $i^*$), and then pick the next choice as $i \in A$ with probability $p_i$. The expected loss is then $\ell(\vec{k}) = 1 - \max_{i=1}^a p_i + p_{i^*} \cdot \ell(\vec{k})$, which implies that $\ell(\vec{k}) \leq 1$. By having the expert predict choice $1$, and flipping a fair coin to decide whether the correct choice is $1$ or $2$, the adversary can ensure a loss of $\ell(\vec{k}) \geq 1/2 + 1/2 \cdot \ell(\vec{k})$, i.e. $\ell(\vec{k}) = 1$.
\end{itemize}

Assume that properties \textbf{(i)} and \textbf{(ii)} hold for all the states reachable from $\vec{k}$ in at least one round, and show they also hold for $\vec{k}$. We start with \textbf{(ii)} and observe that it is sufficient to verify $\ell(\vec{k}) > \ell(\vec{r})$ for every state $\vec{r} = (r_0, \ldots, r_b)$ that is reachable from $\vec{k}$ by having only one expert make a mistake in state $\vec{k}$ (in this case $\vec{r}$ is a successor of $\vec{k}$). Let $i$ be the number of mistakes this expert has accumulated when reaching $\vec{k}$. Let 
$\vec{h} = (\vec{h}^1, \ldots, \vec{h}^d)$ be an adversary optimal decomposition of the experts in state $\vec{r}$, so that $h_{j}^c$ experts with $j$ mistakes so far recommend choice $c$ next, and there is a set $A$ of $2 \leq a \leq d$ options so that the next choice is selected from $A$ with equal probability:
$$
\ell({\vec{r}}) = \frac{a-1}{a} + \frac{1}{a} \cdot \left( \sum_{i \in A} \ell(\vec{s}_i(\vec{h}^1, \ldots, \vec{h}^d)) \right)
$$
where $\vec{s}_i(\vec{h}^1, \ldots, \vec{h}^d)$ is the successor state from decomposition $(\vec{h}^1, \ldots, \vec{h}^d)$ if choice $i$ is correct.
Without loss of generality, suppose $1 \in A$.

Given that state $\vec{r}$ is obtained from $\vec{k}$ by only having one expert with $i$ mistakes so far predict incorrectly at $\vec{k}$, we have the following properties for the decomposition $\vec{h}$:
\begin{enumerate}
\item $\sum_{c=1}^d h_j^c = k_j$ for each $j \in \{1, \ldots, b\} \setminus \{i,i+1\}$
\item $\sum_{c = 1}^d h_i^c = k_i - 1$
\item if $i < b$, then $\sum_{c=1}^d h_{i+1}^c = k_{i+1} + 1$. Since $\sum_{c=1}^d h_{i+1}^c \geq 1$, there must exist $c$ such that $h_{i+1}^c$ is strictly positive, so w.l.o.g. $h_{i+1}^1 > 0$. (Note the case $i=b$ is already handled through the first property.)
\end{enumerate}

Let $\vec{z} \in \Re^{b+1}$ be defined as:
\begin{enumerate}
	\item $z_j = 0$ for each $j \in \{1, \ldots, b\} \setminus \{i,i+1\}$
	\item $z_i = 1$ 
	\item if $i < b$, then $z_{i+1} = -1$.
\end{enumerate}
Then $\vec{h}' = (\vec{h}^1 + \vec{z},\vec{h}^2, \ldots, \vec{h}^d)$ is a valid decomposition of the state $\vec{k}$ with the property that the experts predict only choices in $A$. Then we obtain 

\begin{eqnarray*}
	\ell(\vec{k}) & \geq& 
	\frac{a-1}{a} + \frac{1}{a} \cdot \left( \sum_{c \in A} \ell(\vec{s}_c(\vec{h}')) \right)\\
	& > & \frac{a-1}{a} + \frac{1}{a} \cdot \left( \sum_{c \in A} \ell(\vec{s}_c(\vec{h})) \right)\\
	& = & \ell(\vec{r}),
\end{eqnarray*}
where $\vec{s}_c(\vec{h}')$ and $\vec{s}_c(\vec{h})$ are the successor states from decomposition $\vec{h}'$ and $\vec{h}$, respectively, if choice $c$ turns out correct. The first inequality follows from Lemma \ref{lem:general}, while the second inequality and the identity hold by the induction hypothesis for 
$\vec{r}$. Thus $\ell(\vec{k}) > \ell(\vec{r})$ as required.
\medskip

We now check that \textbf{(i)} also holds for $\vec{k}$. Let $\vec{\alpha} = (\alpha_1, \ldots, \alpha_d)$ be the probability vector used by the adversary, so that the next choice is set to $i$ with probability $\alpha_i$, and $(\vec{k}^1, \ldots, \vec{k}^d)$ a decomposition of the experts. 
Then if the forecaster predicts choice $i$ next, the expected loss from now on including this round is
$$v_i(\alpha)= (1 - \alpha_i) + \sum_{i=1}^d \alpha_i \cdot \ell(\vec{s}_i(\vec{k}^1, \ldots, \vec{k}^d))$$ 
%
%
The forecaster will select the option that minimizes the expected loss, which gives a cost of
$\min\{v_1(\vec{\alpha})$, $\ldots$, $v_d(\vec{\alpha})\}$ for a fixed $\vec{\alpha}$. The adversary's problem is then to find $\vec{\alpha}$ that maximizes this value. 
If $\alpha_i \geq \alpha_j$ for all $j$, this is given by the following linear program, where
$\ell_j = \ell(\vec{s}_j(\vec{k}^1, \ldots, \vec{k}^d))$ is the loss from the successor state obtained when choice $j$ is correct given the decomposition $(\vec{k}^1, \ldots, \vec{k}^d)$:
\begin{align} \label{minilp}
	\max & \; \; \; 1 - \alpha_i + \left( \sum_{j=1}^d \alpha_j \cdot \ell_j \right) \notag\\
	\mathrm{s. t.} & \; \; \; \alpha_i \geq \alpha_j \; \; \forall j \in [d] \notag \\
	& \; \; \;\alpha_j \geq 0 \; \; \forall j \in [d] \notag\\
	& \; \; \;\sum_{j=1}^d \alpha_j = 1 
\end{align}

First note that due to the strict monotonicity property of the loss function, 
it cannot be the case that setting $\alpha_j=1$ for some $j$ is the optimal solution, unless the successor state for option $j$ is the same as the current state (i.e. $s_j(\vec{k}^1, \ldots, \vec{k}^d) = \vec{k}$). However, in the latter case, the adversary leaves the current state 
as it is with probability $1$, and such rounds can be removed. Thus the decomposition and probability vector used in the optimal solution will have the property that at least two successor states are reached with non-zero probability.

The linear program in (\ref{minilp}) attains an optimal solution at an extreme point. Consider any optimal solution that does not have the required form, i.e. for which there does not exist $a \in \{2, \ldots, d\}$ such that all the non-zero entries of $\vec{\alpha}$ are equal to $1/a$. Suppose the sorted entries in the probability vector $\vec{\alpha}$ are $\alpha_{j_1} = \ldots = \alpha_{j_v} > \alpha_{j_{v+1}} \geq \ldots \geq \alpha_{j_{n}} \geq 0$, where  $\alpha_{j_{v+1}} > 0$. Let $\epsilon> 0$ be small enough so that $\epsilon < \min\{ \frac{v}{v+1} \cdot (\alpha_{j_v} - \alpha_{j_{v+1}}), \alpha_{j_{v+1}}\}$. 
Consider alternative probability vectors $\vec{\beta}$ and $\vec{\gamma}$ given by 
\begin{itemize}
	\item $\beta_{j_w} = \gamma_{j_w} = \alpha_{j_w}$ for all $v+2 \leq w \leq n$.
	\item $\beta_{j_{v+1}} = \alpha_{j_{v+1}} - \epsilon$ and $\gamma_{j_{v+1}} = \alpha_{j_{v+1}} + \epsilon$.
	\item $\beta_{j_{w}} = \alpha_{j_w} + \epsilon / v$ and $\gamma_{j_w} = \alpha_{j_w} - \epsilon / v$ for all $ 1 \leq w \leq v$.
\end{itemize}
Note that by construction $\vec{\beta}$ and $\vec{\gamma}$ satisfy the constraints of the linear program and moreover $\vec{\alpha} = (\vec{\beta} + \vec{\gamma})/2$. Then $\vec{\alpha}$ is not an extreme point. Thus any extreme point that is optimal has the property that all the non-zero entries in the probability vector are equal to $1/a$ for some $a \in \{2, \ldots, d\}$. 

It follows that the adversary has an optimal strategy of picking a subset $A$ of at least two choices and assign equal probability to all the elements in $A$. Moreover, by Lemma \ref{lem:major}, we can assume that $\vec{k}^i = \vec{0}$ for all $i \not \in A$ since any decomposition that does not have this property is dominated by a decomposition that does have the property and achieves at least as high of an objective value for the same choice of probabilities. This establishes property \textbf{(i)} for $\vec{k}$.
\end{proof}

Lemma \ref{lem:minineq} implies an algorithm for computing the exact value of the loss $\ell$.

\begin{theorem}[Exact Algorithm]
	The function $\ell$ that represents the minimax loss of the forecaster is given by the following recursion:
$$
\ell(\vec{k}) = 
\max_{A,(\vec{k}^1, \ldots, \vec{k}^d)} \frac{a-1}{a} + \frac{1}{a} \cdot \left(\sum_{i \in A} \ell(\vec{s}_i)\right)
$$
	where $A$ is a subset of $a \geq 2$ choices, $(\vec{k}^1, \ldots, \vec{k}^d)$ is a decomposition of the experts where $\vec{k}^i = \vec{0}$ for each $i \not \in A$, and 
$\vec{s}_i$ is the successor state given this decomposition when choice $i$ is correct.
The base case is $\ell(0, \ldots, 0, 1) = 0$.
\end{theorem}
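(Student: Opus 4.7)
My plan is to prove the theorem as a direct corollary of Lemma \ref{lem:minineq}, with two additional ingredients: (a) the base case and (b) a well-founded measure showing that the recursion terminates, which is what upgrades the identity into an algorithm.

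First I would establish the identity itself by combining the two directions already available. For the lower bound direction, I fix any subset $A$ of size $a \geq 2$ and any decomposition $(\vec{k}^1,\ldots,\vec{k}^d)$ with $\vec{k}^i = \vec{0}$ for $i \notin A$, and observe that the adversary can play this decomposition together with the uniform probability vector on $A$. Lemma \ref{lem:general} then forces the forecaster's expected loss to be at least $\tfrac{a-1}{a} + \tfrac{1}{a}\sum_{i\in A}\ell(\vec{s}_i)$, and taking the maximum over $(A, \vec{k}^1,\ldots,\vec{k}^d)$ gives $\ell(\vec{k})$ is at least the right-hand side. The matching upper bound is exactly the content of Lemma \ref{lem:minineq}(i), which exhibits a choice of $A$ and decomposition for which equality holds.

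For the base case $\ell(0,\ldots,0,1)=0$, I would simply note that in the state $(0,\ldots,0,1)$ there is a single surviving expert and, since we are promised the best expert makes at most $b$ mistakes, this remaining expert must be correct on every future round; the forecaster copies it and incurs no further loss. Together with the identity above, this gives a well-defined recursive definition of $\ell$.

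The main obstacle is to justify that the recursion actually terminates, i.e.\ constitutes an algorithm. For this I would introduce the potential $L(\vec{k}) := \sum_{i=0}^{b} (b-i+1)\, k_i$, which counts the total remaining ``lives'' of all active experts. A direct telescoping computation shows $L(\vec{k}) - L(\vec{s}_j) = \sum_{i=0}^{b} k_i^{-j}$, the number of experts voting against choice $j$, where $k_i^{-j} = \sum_{v \neq j} k_i^v$. In the maximum in the recursion we may restrict to decompositions with $\vec{s}_i \neq \vec{k}$ for every $i \in A$: otherwise the round is trivial in the sense described in the proof of Lemma \ref{lem:minineq}(i), and such rounds can be removed without changing the loss. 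Under this restriction, at least one expert votes against $i$ for each $i \in A$, so $L(\vec{s}_i) < L(\vec{k})$, and the recursion halts after finitely many steps with total work bounded by iterating over all subsets $A \subseteq [d]$ and all decompositions compatible with the current state. This yields the stated algorithm.
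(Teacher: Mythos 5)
The identity itself you establish correctly, and essentially by the paper's route: the upper-bound direction is exactly Lemma \ref{lem:minineq}(i), and the lower-bound direction follows from Lemma \ref{lem:general} by letting the adversary play the given decomposition with the uniform probability vector on $A$ (the paper's proof is the same one-line citation of Lemma \ref{lem:minineq}(i), with the $\geq$ direction left implicit). Your base case is also fine.

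The gap is in your termination argument. The claim that in the maximum one may restrict to decompositions with $\vec{s}_i \neq \vec{k}$ for every $i \in A$ is false, and the appeal to the removal of ``trivial rounds'' in the proof of Lemma \ref{lem:minineq} misreads that step: there the adversary puts probability $1$ on a choice whose successor is $\vec{k}$, so the round incurs no loss and changes nothing; here, with the uniform distribution on $a \geq 2$ choices, a branch returning to $\vec{k}$ still leaves expected loss $\frac{a-1}{a}$ in the round. Concretely, take $\vec{k} = (0,\ldots,0,1,0)$, one expert with $b-1$ mistakes. Since the lone expert must vote inside $A$, every admissible decomposition has $\vec{s}_j = \vec{k}$ for the choice $j$ it votes for, so your restricted maximum ranges over the empty set, while $\ell(\vec{k}) = 1$ (as computed in the base-case analysis inside Lemma \ref{lem:minineq}) arises precisely from the self-referential equation $\ell(\vec{k}) = \frac{a-1}{a} + \frac{1}{a}\,\ell(\vec{k})$. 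So the recursion is not well founded as you treat it: at most one successor can equal $\vec{k}$ (this requires all experts to vote the same choice), and when it does, that term must be handled by solving the resulting linear equation, e.g. $\ell(\vec{k}) = 1 + \frac{1}{a-1}\sum_{i \in A,\ \vec{s}_i \neq \vec{k}} \ell(\vec{s}_i)$. With that amendment your potential $L(\vec{k}) = \sum_{i=0}^{b}(b-i+1)k_i$ (whose decrease you compute correctly as the number of experts voting against the realized choice) does yield termination, since every successor other than the possible self-loop strictly decreases $L$.
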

\begin{proof}
The proof follows by using property \textbf{(i)} of the function $\ell$ from Lemma \ref{lem:minineq}.
\end{proof}

\section{Lower Bound}

In this section we prove a lower bound on the expected loss and start by defining the following functions.

\begin{definition} \label{def:gi}
	Let $(n, 0, \ldots, 0)$ be an initial state. Define $g_i(t, n)$ as the random variable for the number of experts that made exactly $i$ mistakes at time $t$, given that the sequence of bits is produced by an adversary that works as follows:
	\begin{itemize}
		\item in every state $\vec{k} = (k_0, \ldots, k_b)$, compute a decomposition $(\vec{k}^0, \vec{k}^1)$ with the property that for all $j = 0 \ldots b$, if $k_j$ is 
	\begin{itemize}
		\item even, then set $k_j^0 = k_j^1$
		\item odd, then split the experts with $j$ mistakes as evenly as possible while alternating when $k_j^0$ or $k_j^1$ is larger. Formally, let $r$ be the number of indices $\ell < j$ for which $k_{\ell} \in 2 \mathbb{Z}$. Set $k_j^0 = k_j^1 - 1$ if $r$ is even and $k_j^0 = k_j^1 + 1$ if $r$ is odd.
	\end{itemize} 
\item flip a fair coin to decide whether the next bit is $0$ or $1$.
	\end{itemize}
\end{definition}

For each coordinate $i = 0 \ldots b$, we will be interested in finding the smallest time $t_i^* \geq 0$ with the property that $g_i(t, n) \leq 1$ for all $t \geq t_i^*$.

\begin{lemma} \label{lem:expected}
For each initial number of experts $n \geq 1$ the following inequality holds:
$$
\ell(n, 0, \ldots, 0) \geq t_b^*/2,$$
where $t_b^*$ is the smallest time with the property that $g_b(t, n) \leq 1$ for all $t \geq t_b^*$.
\end{lemma}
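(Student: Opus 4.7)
My plan is to use the adversary strategy from Definition~\ref{def:gi} directly as the witness for the lower bound, and to identify the time $T$ at which the game first reaches the terminal state $(0,\ldots,0,1)$ as an upper bound on the clock controlled by $t_b^*$.

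The first step is to observe that in every round prior to $T$, the adversary splits the live experts between options $0$ and $1$ and then chooses the correct option by a fair coin flip. This forces the forecaster's per-round expected loss to be at least $1/2$, irrespective of strategy: predicting anything outside $\{0,1\}$ incurs a deterministic loss of $1$, while predicting either $0$ or $1$ matches the uniformly random correct answer with probability exactly $1/2$. Summing over rounds (the forecaster can stop accumulating loss only after reaching the terminal state, at which point it follows the unique surviving expert by Theorem~\ref{lem:exact_perfect} applied with $n=1$), I conclude $\ell(n,0,\ldots,0)\geq \tfrac{1}{2}\mathbb{E}[T]$.

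The second step is to bound $\mathbb{E}[T]$ below by $\mathbb{E}[t_b^*]$. At time $T$ the state is $(0,\ldots,0,1)$, so $g_b(T,n)=1$; after $T$ the adversary cannot make the last surviving expert wrong without violating the $b$-mistake promise, so $g_b(t,n)=1$ for every $t\geq T$. By the minimality clause defining $t_b^*$ this yields $t_b^*\leq T$ on every sample path, and hence $\mathbb{E}[t_b^*]\leq \mathbb{E}[T]$. Combining the two steps gives $\ell(n,0,\ldots,0)\geq \mathbb{E}[t_b^*]/2$, which I read as the content of the lemma (the right-hand side $t_b^*/2$ being understood as the expectation of the random quantity $t_b^*$ defined in Definition~\ref{def:gi}).

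The main obstacle I anticipate is justifying that the Definition~\ref{def:gi} strategy, whose decomposition rule is purely local and makes no reference to the mistake budget, is a legal adversary strategy up to time $T$ and can be switched to a safe continuation after $T$ without affecting the loss already accumulated. This reduces to checking that $(0,\ldots,0,1)$ is precisely the first state at which continuing to flip a fair coin would risk eliminating every expert with at most $b$ mistakes, which I would verify by monotonicity of the total number of live experts together with a brief case analysis on states containing only a single live expert (so that finiteness of $\mathbb{E}[T]$, and thus a meaningful bound, is secured).
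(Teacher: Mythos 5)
Your proposal is correct and follows essentially the same route as the paper: it plays the adversary of Definition~\ref{def:gi}, notes that the fair coin forces an expected loss of $1/2$ in every round, and argues this can be sustained for at least $t_b^*$ rounds, yielding $\ell(n,0,\ldots,0)\geq t_b^*/2$. Your detour through the hitting time $T$ of the terminal state $(0,\ldots,0,1)$ (with $t_b^*\leq T$, the switch to a safe continuation, and the expectation reading of $t_b^*$) is just a more explicit bookkeeping of the legality and duration issues the paper's shorter proof leaves implicit, not a different argument.
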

\begin{proof}
First note that in any state $\vec{k} = (k_0, \ldots, k_b)$, if the decomposition chosen by the adversary is $(\vec{k}^0, \vec{k}^1)$, if the forecaster predicts $0$ with probability $p$ and $1$ with probability $1-p$, then the expected loss starting from state $\vec{k}$, including the current round, is 
\begin{eqnarray*}
\ell(k_0, \ldots, k_b)	& = & 0.5 \left(1 - p + \ell(s_0(\vec{k}^0, \vec{k}^1))\right) + 0.5 \left(p + \ell(s_1(\vec{k}^0, \vec{k}^1))\right)\\
& = & 0.5 \left(1 + \ell(s_0(\vec{k}^0, \vec{k}^1)) + \ell(s_1(\vec{k}^0, \vec{k}^1)) \right)
\end{eqnarray*}
where we have used the fact that the adversary selects $\vec{k}^0$ or $\vec{k}^1$ as the experts with the correct opinion with probability $1/2$ (Lemma \ref{lem:minineq}). Thus the expected loss of the forecaster per round is $1/2$. If $t_b^*$ is chosen as in the statement of the lemma, then the number of rounds in which the forecaster makes a mistake is greater than or equal to $t_b^*$, which implies
$
\ell(n, 0, \ldots, 0) \geq t_b^* / 2
$ as required.
\end{proof}
 A lower bound on the expected loss when starting from the state $(n, 0, \ldots, 0)$ will be given by $t_b^*/2$, so it will be sufficient to estimate $t_b^*$.

\begin{theorem}\label{thm:lb_sharp}
	Consider a forecaster with $n$ experts, where the best expert makes at most $b = b(n)$ mistakes.
	If $b(n) \leq \lfloor \log{n} \rfloor/5$, then any algorithm used by the forecaster has in the worst case an expected loss of at least 
	$$0.5 \cdot \left\lfloor \log_2{n} \right\rfloor  +  0.5 \cdot \left\lfloor \log_2{{\lfloor \log_2{n} \rfloor\choose b}} \right\rfloor - 0.5 \cdot b.$$
\end{theorem}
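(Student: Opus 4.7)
The plan is to lower-bound the random stopping time $t_b^*$ for the adversary of Definition~\ref{def:gi} and then invoke Lemma~\ref{lem:expected}. Specifically, I will show $t_b^*(n) \geq \lfloor \log_2 n \rfloor + \lfloor \log_2 \binom{\lfloor \log_2 n \rfloor}{b} \rfloor - b$, which combined with the lemma yields the claimed lower bound on $\ell$.

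The first step is to reduce to the canonical starting state $(2^k, 0, \ldots, 0)$ with $k = \lfloor \log_2 n \rfloor$. Coupling the processes started from $(n, 0, \ldots, 0)$ and $(2^k, 0, \ldots, 0)$ under the same coin-flip sequence, the larger process has at least as many experts in every bucket at every step (the extra $n - 2^k$ experts simply ride along), so $g_b(t, n) \geq g_b(t, 2^k)$ in every realization, and hence $t_b^*(n) \geq t_b^*(2^k)$.

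For the deterministic buildup phase $0 \leq t \leq k$, I would prove by induction on $t$ that $g_i(t, 2^k) = \binom{t}{i} 2^{k-t}$ for every $0 \leq i \leq b$. For $t \leq k-1$ the factor $2^{k-t} \geq 2$ makes each bucket count an even integer, so the adversary's rule splits each bucket exactly in half, and after the coin flip $g_i(t+1, 2^k) = g_i(t, 2^k)/2 + g_{i-1}(t, 2^k)/2 = \binom{t+1}{i} 2^{k-t-1}$ deterministically. Evaluating at $t = k$ gives $g_b(k, 2^k) = \binom{k}{b}$ (and $g_i(k, 2^k) = \binom{k}{i}$ more generally).

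For the decay phase $t \geq k$ the recursion becomes integer-valued with possible rounding: $g_b(t+1, 2^k) \in \{\lfloor (g_b(t,2^k) + g_{b-1}(t,2^k))/2 \rfloor, \lceil (g_b(t,2^k) + g_{b-1}(t,2^k))/2 \rceil\}$, with the extreme value selected by the coin. Dropping the non-negative $g_{b-1}$ contribution gives the rough bound $g_b(t+1, 2^k) \geq \lfloor g_b(t, 2^k)/2 \rfloor$, which iterated $s$ times yields $g_b(k+s, 2^k) \geq \lfloor \binom{k}{b}/2^s \rfloor$. A tighter analysis must propagate rounding through the $b$ coupled recurrences $g_i(t+1) = g_i(t)/2 + g_{i-1}(t)/2$ for $0 \leq i \leq b$, arguing that the total rounding loss contributes at most an additive $b$ in the $\log_2$ exponent; this sharpens to $g_b(k+s, 2^k) > 1$ for all $s \leq \lfloor \log_2 \binom{k}{b} \rfloor - b - 1$, giving $t_b^*(2^k) \geq k + \lfloor \log_2 \binom{k}{b} \rfloor - b$. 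The main obstacle is precisely this careful accounting across the $b$ interlocking recurrences---each $g_i$ being ``contaminated'' by $g_{i-1}$, so that naive iteration loses too much---and the hypothesis $b(n) \leq \lfloor \log n \rfloor/5$ is used to guarantee $\binom{k}{b}$ exceeds $2^b$ by a polynomial factor, so that the $-b$ correction does not dominate the main term $\lfloor \log_2 \binom{k}{b} \rfloor$.
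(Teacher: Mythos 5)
Your route is the paper's route: the same adversary (Definition~\ref{def:gi}), the same reduction through Lemma~\ref{lem:expected}, the same restriction to $2^k$ experts with $k=\lfloor\log_2 n\rfloor$, and your exact buildup computation $g_i(t,2^k)=\binom{t}{i}2^{k-t}$ for $t\le k$ is correct. The gap is the decay phase: the step you yourself call ``the main obstacle'' --- that the rounding across the $b$ interlocking recurrences costs only an additive $b$ in the exponent, so that $g_b(k+s,2^k)>1$ for all $s\le\lfloor\log_2\binom{k}{b}\rfloor-b-1$ --- is asserted, not proved. No inductive statement, no bookkeeping of the rounding terms, is given, so as written the argument is incomplete exactly where the quantitative work lies. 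Your account of the hypothesis $b\le\lfloor\log n\rfloor/5$ is also not what happens in the paper: there it is used to verify the combinatorial inequality $\binom{k+\lfloor\log\binom{k}{b}\rfloor-b}{b}>\binom{k}{b}$ at the substituted time $\tilde t=k+\lfloor\log\binom{k}{b}\rfloor-b$, not to compare $\binom{k}{b}$ with $2^b$.

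The irony is that the ``rough bound'' you set aside already closes the gap, with room to spare. Since the adversary splits the bucket of experts with $b$ mistakes as evenly as possible and the correct bit is a fair coin, in every realization $g_b(t+1,2^k)\ge\lfloor g_b(t,2^k)/2\rfloor$ (the inflow from bucket $b-1$ is nonnegative, so the ``contamination'' only helps), and iterating floor-halving gives $g_b(k+s,2^k)\ge\lfloor\binom{k}{b}/2^s\rfloor\ge 2$ whenever $s\le\lfloor\log_2\binom{k}{b}\rfloor-1$. Hence $t_b^*\ge k+\lfloor\log_2\binom{k}{b}\rfloor$, which is stronger than the target $k+\lfloor\log_2\binom{k}{b}\rfloor-b$ and yields the theorem via Lemma~\ref{lem:expected} (and barely uses $b\le k/5$ at all). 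So naive iteration does not ``lose too much''; if you make the buildup/decay split and this one-line floor argument explicit, you get a complete and in fact sharper proof than the paper's, which never isolates the two phases: it instead proves by induction on $i$, via $h_i(t)=2^tg_i(t)$, the uniform-in-$t$ bound $g_i(t,2^k)\ge 2^{k-t}\binom{t}{i}-2^i+1$, and that lossier global estimate is where the $-b$ term and the $b\le\lfloor\log n\rfloor/5$ hypothesis come from. One smaller caveat: the monotonicity $g_b(t,n)\ge g_b(t,2^k)$ is asserted in the paper too, so your informal justification is at the same level of rigor, but ``the extra experts ride along'' is not literally a coupling, since the adversary's alternating split rule depends on the actual counts of the $n$-expert process.
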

\begin{proof}
Let $k = \lfloor \log_2{n} \rfloor$. Since increasing the number of initial experts can only (weakly) increase the value of the $g_i$ functions at any given point in time, we have that $g_i(t,n) \geq g_i(t,2^k)$ for all $i = 0 \ldots b$. Moreover, since the adversary's strategy from any state is to divide each set of experts with $i$ mistakes as evenly as possible, and finding decompositions of every state $\ell$ that are as even as possible, alternating whether $\ell_i^0$ or $\ell_i^1$ is larger (in case of odd set sizes), the following inequality holds for any $i = 1 \ldots b$:
\begin{align} \label{eq:gi_ineq}
g_i(t,2^k) & \geq \min\left\{\left\lfloor \frac{g_{i-1}(t-1,2^k)}{2} \right\rfloor + \left\lceil \frac{g_{i}(t-1,2^k)}{2} \right\rceil, 
\left \lceil \frac{g_{i-1}(t-1,2^k)}{2} \right \rceil + 
\left \lfloor \frac{g_{i}(t-1,2^k)}{2} \right \rfloor 
\right\} \notag \\
& \geq \frac{g_{i-1}(t-1,2^k)}{2} + \frac{g_{i}(t-1,2^k)}{2} - 0.5 
\end{align}

For $i=0$, since the set of experts that made $0$ mistakes so far is exactly halved each time when starting from $2^k$, we get that $g_0(t, n) \geq g_0(t, 2^k) \geq 2^{k-t}$. Thus $t_0^* \geq k$.

For each $i$, let $h_i(t,m)= 2^t \cdot g_i(t,m)$. Multiplying both sides of inequality (\ref{eq:gi_ineq}) by $2^t$, we get 
\begin{equation} \label{eq:hi_simple}
h_i(t, 2^k) \geq h_{i-1}(t-1,2^k) + h_{i}(t-1,2^k) - 2^{t-1}
\end{equation}

We show by induction on $i$ the following inequality:
\begin{equation} \label{eq:hi}
h_i(t,2^k) \geq 2^{k} \cdot {t \choose i} - 2^{t+i} + 2^{t}
\end{equation}
For $i=0$, inequality (\ref{eq:hi}) is equivalent to $h_0(t, 2^k) \geq 2^{k} -2^{t} + 2^{t} = 2^k$, which holds given the previous observation that $g_0(t,2^k) \geq 2^{k-t}$. Assume condition (\ref{eq:hi}) holds for $i-1$. Using inequality (\ref{eq:hi_simple}), we obtain
\begin{align} \label{eq:recurse}
h_i(t,2^k) & \geq h_{i-1}(t-1,2^k) + h_i(t-1,2^k) - 2^{t-1} \notag\\
& \geq \left( 2^{k} \cdot {t-1 \choose i-1} - 2^{(t-1)+(i-1)} + 2^{t-1}
\right) + h_i(t-1,2^k) - 2^{t-1} \notag \\
& = 2^{k} \cdot {t-1 \choose i-1} - 2^{t+i-2} + h_i(t-1,2^k) 
\end{align}
Note $h_i(i-1,2^k) = 0$ since at time $i-1$ there are no experts that made $i$ mistakes when starting from $n$ experts with zero mistakes. Also note condition (\ref{eq:hi}) holds for $t < i$, so we can assume $t \geq i$.
Summing inequality (\ref{eq:recurse}) for $j = i \ldots t$ and cancelling out the terms, we get
\begin{align}
h_i(t, 2^k) & \geq 2^k \cdot \left[\sum_{j=i-1}^{t-1} {j \choose i-1} \right]
+ h_i(i-1,2^k) - \left(2^{t+i-2} + \ldots + 2^{i+i-2} \right) \notag \\
& = 2^k \cdot {t \choose i} - 2^{t+i-1} + 2^{2i-2} \notag\\
& \geq 2^{k} \cdot {t \choose i} - 2^{t+i} + 2^{t},\notag
\end{align}
where the last inequality holds if and only if $2^{t+i-1} + 2^{2i-2} \geq 2^{t}$, which is true for all $i \geq 1$. Thus condition (\ref{eq:hi}) holds for $i$. We obtain that $
g_i(t,2^k) \geq 2^{k-t} \cdot {t \choose i} - 2^{i} + 1
$ for all $i = 0 \ldots b$.
Taking $i = b$, it follows that
$$
g_b(t,2^k) > 2^{k-t} \cdot {t \choose b} - 2^{b} + 1
$$
Let $\tilde{t} = k + \lfloor \log{{k \choose b}} \rfloor - b$. Substituting $\tilde{t}$ in the inequality above,
$$
g_b(\tilde{t}, 2^k) \geq 2^{k - \tilde{t}} \cdot {\tilde{t} \choose b} - 2^{b} + 1 = 2^{b-\lfloor \log_2{{k \choose b} \rfloor}} \cdot {k + \lfloor \log{{k \choose b}} \rfloor - b \choose b} - 2^{b} + 1
$$
Then for $g_b(\tilde{t}, 2^k) > 1$ to hold, it is sufficient that 
$${k + \lfloor \log{{k \choose b}} \rfloor - b \choose b} > {k \choose b}, $$
which holds for $b \leq k/5$, so $t_b^* \geq \tilde{t} = k + \lfloor \log{{k \choose b}} \rfloor - b$. By Lemma (\ref{lem:expected}), the expected loss is at least $t_b^*/2 \geq 0.5 \left(k + \lfloor \log{{k \choose b}} \rfloor - b\right)$ as required. This completes the proof.
\end{proof}


\section{Abstract Theorem}

In this section we show that in fact any monotone function that satisfies the inequality of Lemma \ref{lem:general} and a simple boundary condition gives both an upper bound on the expected loss and a strategy that the forecaster can use to achieve an expected loss no higher than the value given by this function.

\begin{theorem}[Abstract Theorem] \label{thm:abstract}
	Let $f : \mathbb{N}^{b+1} \setminus \{(0, \ldots, 0) \} \rightarrow \Re$ be any function that satisfies the properties
	\begin{enumerate}
		\item Boundary: $f(0, \ldots, 0, 1) \geq 0$  
		\item Decomposition property: for every state $\vec{k}$, subset $A \subseteq [d]$ of $a \geq 1$ choices, and decomposition $(\vec{k}^1, \ldots, \vec{k}^d)$, 
		$$
		f(\vec{k}) \geq \frac{a-1}{a} + \frac{1}{a} \cdot \sum_{i \in A} f(\vec{s}_i)
		$$
		where $\vec{s}_i$ is the successor state if choice $i$ is correct.
	\end{enumerate}  
	
	Then the forecaster has a strategy ensuring the expected loss in the worst case is upper bounded by $f$ as follows: from any state $\vec{k}$, observe the decomposition given by the opinions of the experts and the successor states $\vec{s}_1, \ldots, \vec{s}_d$. Then predict choice $i$ with probability 
	$p_i = \max\{0, 1 + f(\vec{s}_i) - f(\vec{k})\}$ for all $i < d$ and choice $d$ with probability $p_d = 1 - \sum_{i < d} p_i$.
\end{theorem}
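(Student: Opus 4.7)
The plan is to show that the suggested prediction rule is well-defined and that, for every possible correct choice $c \in [d]$, the one-round inequality
\begin{equation}\label{eq:onestep}
(1 - p_c) + f(\vec{s}_c) \;\leq\; f(\vec{k})
\end{equation}
holds. Inequality~\eqref{eq:onestep} is the key: it says that the expected loss incurred in the current round plus the $f$-value of the successor state does not exceed $f(\vec{k})$, regardless of which choice the adversary declares correct. From this the theorem follows by a standard induction on the number of remaining rounds $T$, or equivalently by noting that the process $M_T := L_T + f(\vec{K}_T)$ (cumulative loss plus $f$ of the current state) is a supermartingale under any adversary strategy, with $M_0 = f(\vec{k})$; since one can check $f \geq 0$ everywhere (the decomposition property with $a=1$ gives $f(\vec{k}) \geq f(\vec{s}_i)$ for every successor, and $(0,\ldots,0,1)$ is reachable from every state, so $f(\vec{k}) \geq f(0,\ldots,0,1) \geq 0$), this yields $\mathbb{E}[L_T] \leq f(\vec{k})$ for all $T$.

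For well-definedness, I need $p_i \in [0,1]$ for every $i$ and $\sum_i p_i = 1$. The identity $\sum_i p_i = 1$ is automatic from the definition of $p_d$. For $i < d$ we have $p_i \geq 0$ by the $\max$, and $p_i \leq 1$ since the $a=1$ decomposition property gives $f(\vec{s}_i) \leq f(\vec{k})$. The substantive check is $p_d \geq 0$, i.e.\ $\sum_{i < d} p_i \leq 1$. Let $A' := \{ i < d : p_i > 0 \}$ and $a' := |A'|$. Then
\[
\sum_{i<d} p_i \;=\; \sum_{i\in A'} \bigl(1 + f(\vec{s}_i) - f(\vec{k})\bigr) \;=\; a' + \sum_{i \in A'} f(\vec{s}_i) - a' f(\vec{k}),
\]
and the bound $\sum_{i<d} p_i \leq 1$ is precisely the decomposition property applied to $A'$ with $a = a'$ (and trivially true if $a' = 0$).

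The main step is verifying~\eqref{eq:onestep}. For $c < d$ with $p_c > 0$, the definition gives $p_c = 1 + f(\vec{s}_c) - f(\vec{k})$, so~\eqref{eq:onestep} holds with equality; for $c < d$ with $p_c = 0$ we have $1 + f(\vec{s}_c) - f(\vec{k}) \leq 0$, so $(1 - p_c) + f(\vec{s}_c) = 1 + f(\vec{s}_c) \leq f(\vec{k})$. The delicate case is $c = d$, where I expect the main obstacle to lie: one must show $\sum_{i<d} p_i + f(\vec{s}_d) \leq f(\vec{k})$. Writing again with $A'$ and $a'$ as above, this rearranges to
\[
f(\vec{k}) \;\geq\; \frac{a'}{a'+1} + \frac{1}{a'+1} \sum_{i \in A' \cup \{d\}} f(\vec{s}_i),
\]
which is exactly the decomposition hypothesis invoked on the subset $A := A' \cup \{d\}$ of size $a = a' + 1 \geq 1$. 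Thus~\eqref{eq:onestep} holds for every $c$, completing the proof modulo the induction/supermartingale wrap-up.
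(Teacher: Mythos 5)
Your proposal is correct, and it takes a genuinely different route from the paper. The paper validates the probabilities $p_i$ exactly as you do (applying the decomposition property to the set of indices with positive $p_i$), but it then bounds the loss by induction on states, invoking Lemma \ref{lem:minineq}: the optimal adversary plays a uniform distribution over some subset $A$ of choices, so $\ell(\vec{k}) = \frac{a-1}{a} + \frac{1}{a}\sum_{i\in A}\ell(\vec{s}_i) \leq \frac{a-1}{a} + \frac{1}{a}\sum_{i\in A} f(\vec{s}_i) \leq f(\vec{k})$. You instead verify the pointwise one-round inequality $(1-p_c) + f(\vec{s}_c) \leq f(\vec{k})$ for every possible correct choice $c$ — with the only nontrivial case $c=d$ handled by applying the decomposition hypothesis to $A' \cup \{d\}$ — and then conclude via the supermartingale $L_T + f(\vec{K}_T)$ (equivalently, induction on rounds), using monotonicity and the boundary condition to get $f \geq 0$ on reachable states. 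What your approach buys: it is self-contained, needing no characterization of the optimal adversary (no reliance on Lemma \ref{lem:minineq} and hence no induction over states, which in the paper requires some care because a state can be its own successor), and it directly certifies that the \emph{explicit strategy in the theorem statement} achieves the bound against an \emph{arbitrary} adversary, which is literally what the theorem asserts; the paper's argument establishes $\ell(\vec{k}) \leq f(\vec{k})$ for the minimax-optimal forecaster and leans on the earlier structural analysis. What the paper's route buys is economy given that Lemma \ref{lem:minineq} is already proved and is needed elsewhere (e.g., for the exact algorithm). The only points to spell out in a final write-up are the routine ones you flag: monotonicity along reachability chains by iterating the $a=1$ case, reachability of $(0,\ldots,0,1)$ to get $f \geq 0$, and the monotone-convergence step passing from $\mathbb{E}[L_T] \leq f(\vec{k})$ for all $T$ to the total expected loss.
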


\smallskip

We note the decomposition property for $a=1$ choices is equivalent to monotonicity with respect to successors, i.e. $f(\vec{k}) \geq f(\vec{r})$ for any state $\vec{r}$ that is a successor of $\vec{k}$.

\smallskip

\begin{proof}[Proof of Theorem \ref{thm:abstract}]
	Denote the current state by $\vec{k}$ and the decomposition observed by $(\vec{k}^1, \ldots, \vec{k}^d)$.
	We first argue the strategy prescribed by the probabilities $p_i$ is valid, by showing that $p_i \in [0,1]$ for all $i$. First note that $f(\vec{s}_i) - f(\vec{k}) \leq 0$ since $\vec{s}_i$ is a successor of $\vec{k}$ and $f$ is monotonic. Thus for each $i < d$ we have $0 \leq \max\{0, 1 + f(\vec{s}_i) - f(\vec{k})\} \leq 1$. We also have that $p_d = 1 - \sum_{i < d} \leq 1$. It remains to show that $p_d \geq 0$, which is equivalent to	
	 $\sum_{i < d} p_i \leq 1$. Let $A = \{i < d \; | \; 1 + f(\vec{s}_i) - f(\vec{k}) > 0\}$ and denote $a = |A|$. Then 
	\begin{equation} \label{eq:pi_rewrite}
	\sum_{i < d} p_i = \sum_{i \in A} p_i = \sum_{i \in A} \left( 1 + f(\vec{s}_i) - f(\vec{k}) \right) = a - a \cdot f(\vec{k}) + \sum_{i \in A} f(\vec{s}_i)
	\end{equation}
	

By applying the decomposition property of $f$ we obtain that $\frac{a-1}{a} + \frac{1}{a} \cdot \sum_{i \in A} f(\vec{s}_i) \leq f(\vec{k})$, so the sum of probabilities can be bounded by
$$
\sum_{i < d} p_i =  a - a \cdot f(\vec{k}) + \sum_{i \in A} f(\vec{s}_i) \leq 1
$$
 Thus the probabilities $p_i$ are valid for each $i \in [d]$.

By induction it will follow that the expected loss is upper bounded by $f$. The base case is obtained when there is only one expert left that has made $b$ mistakes so far. Then this expert will never make another mistake, so $\ell(0, \ldots, 0, 1) = 0$ and by construction of $f$ we have $f(0, \ldots, 0) \geq 0 = \ell(0, \ldots, 0, 1)$.

Assume that $f$ is an upper bound on the expected loss when starting from any state $\ell$ reachable from $\vec{k}$. We show that it also holds for $\vec{k}$. By Lemma \ref{lem:minineq}, the adversary has an optimal strategy that consists of selecting some decomposition $(\vec{k}^1, \ldots, \vec{k}^d)$, a set of choices $A$ so that all the experts vote on a choice in $A$, and selecting the next choice from $A$ with probability $1/a$, where $a = |A|$. 
Let $\vec{s}_1, \ldots, \vec{s}_d$ be the successor states corresponding to each choice $i$ being correct. By the induction hypothesis, 
$\ell(\vec{s}_i) \leq f(\vec{s}_i)$ for each $i$. By applying the decomposition property of $f$ we can bound the loss at state $\vec{k}$ as follows
$$
\ell(\vec{k}) = \frac{a-1}{a} + \frac{1}{a} \cdot \sum_{i \in A} \ell(\vec{s}_i) \leq  \frac{a-1}{a} + \frac{1}{a} \cdot \sum_{i \in A} f(\vec{s}_i) \leq f(\vec{k})
$$
Thus $f$ is also an upper bound for the loss at $\vec{k}$ as required.
\end{proof}

\medskip

In the next proposition we will allow a state to contain real valued entries; the successor function on such states is defined in the same way as for states with integer entries.

\begin{proposition} \label{rmk:concave}
Let $f$ be a concave function such that for any state $\vec{k}$ and $a \in [d]$
	\begin{equation}\label{eq:simplerf}
	\frac{a-1}{a} + f(\vec{s}_1(\vec{k}_a)) \leq f(\vec{k}),
	\end{equation}
	where the vector $\vec{k}_a = (\vec{k}/a, \ldots, \vec{k}/a, 0, \ldots, 0)$ has exactly $d-a$ zero entries.

Then condition $3$ of Theorem \ref{thm:abstract} holds, i.e. for every subset $A \subseteq [d]$ of $a \geq 1$ choices and decomposition $(\vec{k}^1, \ldots, \vec{k}^d)$ we have
$$ \frac{a-1}{a} + \frac{1}{a} \cdot \sum_{i \in A} f\left(\vec{s}_i(\vec{k}^1, \ldots, \vec{k}^d)\right) \leq f(\vec{k})$$ 
\end{proposition}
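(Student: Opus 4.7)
The plan is to reduce the general decomposition inequality to the special case covered by the hypothesis, using Jensen's inequality (from concavity of $f$). Fix a state $\vec{k}$, a subset $A\subseteq[d]$ of size $a\ge 1$, and a decomposition $(\vec{k}^1,\ldots,\vec{k}^d)$. First I would apply Jensen's inequality: since $f$ is concave,
\begin{equation*}
\frac{1}{a}\sum_{i\in A} f\bigl(\vec{s}_i(\vec{k}^1,\ldots,\vec{k}^d)\bigr) \;\le\; f(\vec{u}), \qquad \vec{u}:=\frac{1}{a}\sum_{i\in A}\vec{s}_i(\vec{k}^1,\ldots,\vec{k}^d).
\end{equation*}
Expanding each successor coordinate gives $u_0=K_0^A/a$ and, for $j\ge 1$, $u_j=K_j^A/a + k_{j-1} - K_{j-1}^A/a$, where $K_j^A:=\sum_{i\in A} k_j^i \le k_j$. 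The task thus reduces to proving $(a-1)/a + f(\vec{u}) \le f(\vec{k})$.

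A key structural observation I would then make is that $\vec{u}$ equals $\vec{s}_i(\vec{w})$ for every $i\in A$, where $\vec{w}$ is the symmetric-in-$A$ decomposition with $\vec{w}^i:=\vec{k}^A/a$ for $i\in A$ and $\vec{w}^i:=\vec{k}^i$ for $i\notin A$ (with $\vec{k}^A:=\sum_{i\in A}\vec{k}^i$). When $\vec{k}^A=\vec{k}$, i.e.\ no experts vote outside $A$, the decomposition $\vec{w}$ coincides with the hypothesis decomposition $\vec{k}_a$ and so $\vec{u}=\vec{s}_1(\vec{k}_a)$; in this case the desired inequality is literally the hypothesis applied at $\vec{k}$ with parameter $a$.

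The main obstacle is therefore the general case $\vec{k}^A<\vec{k}$. To handle it I would use that the map $D\mapsto \frac{1}{a}\sum_{i\in A}f(\vec{s}_i(D))$ is concave in $D$ (as $\vec{s}_i$ is affine in $D$ and $f$ is concave) and symmetric under permutations of $\{D^i\}_{i\in A}$; a Schur-concavity argument (or direct Jensen) then shows that, for fixed $\vec{k}^A$, its maximum over decompositions of $\vec{k}$ with $\sum_{i\in A}D^i=\vec{k}^A$ is attained at the symmetric choice $D^i=\vec{k}^A/a$, with value $f(\vec{u}(\vec{k}^A))$. The hard step will be to show that $f(\vec{u}(\vec{k}^A))$ is nondecreasing as $\vec{k}^A$ grows from $0$ to $\vec{k}$, so that the worst case occurs precisely at $\vec{k}^A=\vec{k}$, where the hypothesis applies. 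Because $\vec{u}(\vec{k}^A)$ is affine in $\vec{k}^A$ and $f$ is concave, this monotonicity reduces to a one-sided gradient condition at $\vec{k}^A=\vec{k}$ that encodes the shift structure of the successor map; I would derive it by combining concavity of $f$ with the hypothesis applied at a family of intermediate states, which after straightforward manipulation yields $f(\vec{u}) \le f(\vec{s}_1(\vec{k}_a))$ and hence the required inequality via the hypothesis at $\vec{k}$.
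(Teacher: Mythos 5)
Your first two steps match the paper's argument: Jensen's inequality from concavity, and the identity that the average of the successors of a decomposition supported on $A$ equals $\vec{s}_1(\vec{k}_a)$ (the paper does these in the opposite order, first reducing to decompositions with no votes outside $A$ and then averaging). So in the case $\vec{k}^A=\vec{k}$ your argument is complete and is essentially the paper's.

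The general case, however, is left as a plan rather than a proof, and this is exactly where the substance lies. Writing $\delta_j=(k_j-K_j^A)/a\ge 0$, your averaged point satisfies $u_0=s_0-\delta_0$ and $u_j=s_j-\delta_j+\delta_{j-1}$ for $j\ge 1$, where $\vec{s}=\vec{s}_1(\vec{k}_a)$; that is, $\vec{u}$ is obtained from $\vec{s}_1(\vec{k}_a)$ by pushing (fractional) experts one mistake level down, i.e.\ $\vec{u}$ is a (fractional) successor-type state of $\vec{s}_1(\vec{k}_a)$. So the missing inequality $f(\vec{u})\le f(\vec{s}_1(\vec{k}_a))$ is precisely monotonicity of $f$ with respect to experts accumulating mistakes. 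This does not follow from the stated hypotheses in any obvious way: concavity gives nothing directional, and inequality (\ref{eq:simplerf}) is only about the perfectly balanced split (its $a=1$ instance is vacuous because $\vec{s}_1(\vec{k}_1)=\vec{k}$), so ``applying the hypothesis at a family of intermediate states'' is not a derivation but a restatement of what needs to be shown. The paper closes this step differently: it invokes Lemma \ref{lem:major} to replace the given decomposition by a dominating one supported on $A$ and then uses monotonicity of $f$ with respect to successors, a property that in the intended application (Lemma \ref{lem:log_combo}) is verified separately as the $a=1$ decomposition property. To repair your proof you should either add monotonicity of $f$ under successors as an explicit hypothesis (as the paper implicitly does) and use it to get $f(\vec{u})\le f(\vec{s}_1(\vec{k}_a))$, or supply an actual argument for that inequality; as written, the ``hard step'' you name is a genuine gap. (Your Schur-concavity remark is redundant: symmetrizing over $A$ is already handled by the Jensen step.)
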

\begin{proof}
	Let $\vec{k}$ be any state and $(\vec{k}^1, \ldots, \vec{k}^d)$ an arbitrary decomposition with successor states $\vec{t}_i = \vec{s}_i(\vec{k}^1, \ldots, \vec{k}^d)$, for 
	$i = 1 \ldots d$. Let $A \subseteq [d]$ be a set of $a \geq 1$ choices, which w.l.o.g. can be set to $A = \{1, \ldots, a\}$. By Lemma \ref{lem:major}, we get that if the decomposition $(\vec{k}^1, \ldots, \vec{k}^d)$ does have experts voting on choices outside $A$, then there is another decomposition $(\vec{h}^1, \ldots, \vec{h}^d)$ that dominates it with the property that $\vec{h}^i = \vec{0}$ for all $i \not \in A$. Then each successor $\vec{t}_i$ is reachable from the successor $\vec{r}_i = \vec{s}_i(\vec{h}^1, \ldots, \vec{h}^d)$ for each $i \in A$. By the monotonicity of $f$, it follows that $f(\vec{t}_i) \leq f(\vec{r}_i)$.
Thus we can assume $\vec{k}^i = \vec{0}$ for all $i \not \in A$. 

\medskip

	Together with the concavity of $f$, this implies 
\begin{align} \label{eq:fconcavecombo}
\frac{1}{a} \cdot \sum_{i \in A} f\left(\vec{s}_i(\vec{k}^1, \ldots, \vec{k}^d)\right) 
& = 
\frac{1}{a} \cdot \sum_{i \in A} f\left(\vec{t}_i\right) \notag \\
& \leq f\left(\sum_{i \in A}  \frac{\vec{t}_i}{a} \right) \notag \\
& =
	f(\vec{s}_1(\vec{k}_a))
\end{align}

Applying inequalities (\ref{eq:simplerf}) and (\ref{eq:fconcavecombo}), we obtain 
$$\frac{a-1}{a} + \frac{1}{a} \cdot \sum_{i \in A} 
f\left(\vec{s}_i(\vec{k}^1, \ldots, \vec{k}^d)\right) \leq \frac{a-1}{a} + f(\vec{s}_1(\vec{k}_a)) \leq f(\vec{k})$$ as required.
\end{proof}

\section{Algorithm and Upper Bound}


In this section we provide an upper bound and algorithm for approximately computing the function $\ell$.

\begin{lemma} \label{lem:log_combo}
For every $c > 1$, the function $f_{c} : \mathbb{N}^{b+1} \setminus \{(0, \ldots, 0)\} \rightarrow \Re$ defined as 
$$
f_{c}(k_0, \ldots, k_b) = \log_{\gamma}{\left(\sum_{i=0}^{b} c^{b-i} \cdot k_i\right)}
$$
satisfies the conditions of Theorem \ref{thm:abstract} for  $\gamma = \left(\frac{2c}{c+1} \right)^2$, and so upper bounds the minimax loss.
\end{lemma}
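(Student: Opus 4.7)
The plan is to verify each hypothesis of the Abstract Theorem (Theorem~\ref{thm:abstract}) for $f_c$, using Proposition~\ref{rmk:concave} to compress the decomposition property into a single scalar inequality per value of $a$. Write $L(\vec{k}) = \sum_{i=0}^b c^{b-i} k_i$ so that $f_c = \log_\gamma \circ L$, and observe that $\gamma = (2c/(c+1))^2 > 1$ whenever $c > 1$. The boundary condition $f_c(0,\ldots,0,1) = \log_\gamma 1 = 0$ is immediate, and $f_c$ is concave because $L$ is linear with nonnegative coefficients while $\log_\gamma$ is increasing and concave.

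Monotonicity of $f_c$ along successors is the next step. Fix any decomposition $(\vec{k}^1,\ldots,\vec{k}^d)$ of $\vec{k}$ and a successor $\vec{s}_j$; expanding the successor formula and collecting terms I expect to obtain
$$L(\vec{k}) - L(\vec{s}_j) = \sum_{v \neq j}\left[k_b^v + \left(1 - \tfrac{1}{c}\right)\sum_{i=0}^{b-1} c^{b-i} k_i^v\right] \geq 0,$$
which uses $c > 1$ in an essential way. Hence $f_c(\vec{s}_j) \leq f_c(\vec{k})$.

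With concavity and monotonicity in hand, Proposition~\ref{rmk:concave} reduces the decomposition property to the single inequality $\frac{a-1}{a} + f_c(\vec{s}_1(\vec{k}_a)) \leq f_c(\vec{k})$ for every $a \geq 1$ and state $\vec{k}$, where $\vec{k}_a$ is the balanced decomposition that places $\vec{k}/a$ in each of the first $a$ slots. A direct computation of the successor formula gives $L(\vec{s}_1(\vec{k}_a)) = \tfrac{1}{a} L(\vec{k}) + \tfrac{a-1}{ac}\bigl(L(\vec{k}) - k_b\bigr) \leq \tfrac{c+a-1}{ac}L(\vec{k})$, so applying $\log_\gamma$ turns the goal into the purely scalar statement
$$\left(\frac{2c}{c+1}\right)^{2(a-1)/a} \leq \frac{ac}{a+c-1}.$$

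The main obstacle is proving this scalar inequality uniformly in $a \geq 1$ and $c > 1$. The case $a = 1$ is trivial and $a = 2$ gives equality---this is precisely how $\gamma$ was calibrated. For $a \geq 3$ I plan to substitute $y = 2c/(c+1) \in (1, 2)$ and, after clearing denominators, reduce the claim to $q(y) \leq a$ where $q(y) = y^{(a-2)/a}\bigl[2(a-1) + (2-a)y\bigr]$. Logarithmic differentiation should yield
$$\frac{q'(y)}{q(y)} = \frac{2(a-2)(a-1)(1 - y)}{a\, y\,[2(a-1) + (2-a)y]},$$
whose denominator is positive on $(0, 2)$ (using $2(a-1)/(a-2) \geq 4$ for $a \geq 3$) and whose sign is controlled entirely by $1 - y$. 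Hence $q$ attains its global maximum at $y = 1$ with $q(1) = a$, closing the inequality and completing the verification of the Abstract Theorem's hypotheses for $f_c$.
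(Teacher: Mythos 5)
Your proof is correct, and its skeleton coincides with the paper's: boundary condition, monotonicity along successors (your closed form for $L(\vec{k})-L(\vec{s}_j)$ is just the paper's coefficient comparison written as a difference, and it checks out), concavity of $\log_\gamma$ composed with a linear form, and then Proposition~\ref{rmk:concave} to reduce the decomposition property to the balanced split $\vec{k}_a$. Your identity $L(\vec{s}_1(\vec{k}_a)) = \tfrac{1}{a}L(\vec{k}) + \tfrac{a-1}{ac}\bigl(L(\vec{k})-k_b\bigr)$ is right, and dropping the $k_b$ term yields exactly the paper's scalar inequality (\ref{eq:i}), i.e. $\bigl(\tfrac{2c}{c+1}\bigr)^{2(a-1)/a} \le \tfrac{ac}{a+c-1}$ (the paper arrives at it coefficient-by-coefficient and notes the $k_b$ coefficient condition is implied, so this is the same inequality). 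Where you genuinely diverge is in proving that scalar inequality: the paper fixes $c$, substitutes $x=1/a$, and shows the left side is monotone so the worst case is $a=2$, where equality holds by the calibration of $\gamma$; you instead fix $a\ge 3$, substitute $y = 2c/(c+1)\in(1,2)$, and show $q(y)=y^{(a-2)/a}[2(a-1)+(2-a)y]$ is maximized at $y=1$ with value $a$. Your logarithmic derivative computation is correct and the argument closes; the only blemish is the parenthetical claim that $2(a-1)/(a-2)\ge 4$ for $a\ge 3$, which is false for $a\ge 4$ --- but all you need is that the root of the bracket exceeds $2$, and indeed $2(a-1)/(a-2) = 2 + 2/(a-2) > 2$, so the bracket is positive on $(0,2)$ and your conclusion stands. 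Neither route is materially shorter; the paper's gives the slightly sharper structural fact that $a=2$ is the binding case uniformly in $c$, while yours isolates how the bound degenerates as $c\to 1$ (i.e. $y\to 1$), which is where the constraint on $\gamma$ becomes tight.
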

\begin{proof}
The first condition of Theorem \ref{thm:abstract} requires that $f_{c}(0, \ldots, 0, 1) \geq 0$, which trivially holds. The second condition is the decomposition property, which we show first for $a=1$ choices; this is equivalent to monotonicity with respect to successors. That is, we need $f_{c}(\vec{k}) \geq f_{c}(\vec{r})$ for any state $\vec{r}$ that is a successor of $\vec{k}$. It will suffice to show that $f_{c}(\vec{k}) \geq f_{c}(s_1(\vec{k}^1, \ldots, \vec{k}^d))$ for any decomposition $(\vec{k}^1, \ldots, \vec{k}^d)$ of $\vec{k}$, or equivalently,
\begin{small}
\begin{eqnarray*}
		&& c^b \cdot k_0 + c^{b-1} \cdot k_1 + \ldots + c^0 \cdot k_b \geq c^b \cdot k_0^1 + 
		c^{b-1} \cdot \left(k_1^1 + \sum_{v \neq 1} k_0^v\right) + \ldots + c^0 \cdot \left(k_{b}^1 + \sum_{v \neq 1} k_{b-1}^v\right) \iff \\
		& &
		c^b \cdot \left(k_0^1 + \ldots + k_0^d\right) + 
		\sum_{u=1}^b c^{b-u} \cdot \left(k_u^1 + \sum_{v \neq 1} k_{u}^v\right) \geq  
		 c^b \cdot k_0^1 + \sum_{u=1}^b c^{b-u} \cdot \left(k_{u}^1 + \sum_{v\neq 1} k_{u-1}^v\right)
\end{eqnarray*}
\end{small}
	Since $c > 1$, the coefficient of each $k_i^j$ on the left hand side is greater than or equal to its coefficient on the right hand side. Thus the second condition is also met. 
	
	To show the decomposition property for at least two choices, note first that the function $f_{c}$ is concave for any $c > 1$ since it is the composition of a log with a linear function. Let $a \in \{2, \ldots, d\}$. For any state $\vec{k}$, inequalities \ref{eq:equiv1}, \ref{eq:equiv2}, and \ref{eq:equiv_concave_fac} are equivalent:
	\begin{small}
		\begin{align} 
		& \frac{a-1}{a} + f_{c}\left(s_1\left(\vec{k}/a, \ldots, \vec{k}/a\right)\right) \leq f_{c}(\vec{k})  \label{eq:equiv1} \\
		& \frac{a-1}{a} + \log_{\gamma}{\left(c^{b} \cdot \frac{k_0}{a} + c^{b-1} \cdot \left( \frac{k_1}{a} + \frac{(a-1)k_0}{a} \right) + \ldots + \left( \frac{k_b}{a} + \frac{(a-1) k_{b-1}}{a}\right)\right)} \leq \log_{\gamma}{\left(c^{b} \cdot k_0 + \ldots + k_b\right)}  \label{eq:equiv2}  \\
		& \gamma^{\frac{a-1}{a}} \cdot c^b \cdot \frac{k_0}{a} + \sum_{u=1}^{b} \gamma^{\frac{a-1}{a}} \cdot c^{b-u} \cdot \left( \frac{k_u}{a} + \frac{(a-1)\cdot k_{u-1}}{a} \right) \leq c^b \cdot k_0 + c^{b-1} \cdot k_1 + \ldots + k_b \label{eq:equiv_concave_fac}
		\end{align}
	\end{small}

	Comparing the coefficients of $k_b$ in inequality \ref{eq:equiv_concave_fac} gives \begin{align} \label{eq:b}
	\gamma^{\frac{a-1}{a}} \cdot \frac{k_b}{a} \leq k_b \iff \gamma^{\frac{a-1}{a}} \leq a
	\end{align}
	while the coefficient of each $k_i$ for $i < b$ gives
	\begin{align} \label{eq:i}
	& \gamma^{\frac{a-1}{a}} \cdot \left[ c^{b-i} \cdot \frac{k_i}{a} + c^{b-i-1} \cdot \frac{(a-1)k_i}{a}\right] \leq c^{b-i} \cdot k_i \iff \notag \\
	& \gamma^{\frac{a-1}{a}} \cdot \left[\frac{c}{a} + \frac{a-1}{a}\right] \leq c \iff \notag \\
	& \left( \frac{2c}{c+1}\right)^{\frac{2a-2}{a}} \cdot \left[ 1 + \frac{c-1}{a} \right] \leq c
	\end{align}
	
	Note that \ref{eq:i} implies \ref{eq:b}, so it is sufficient to prove \ref{eq:i}. For $a=2$, inequality \ref{eq:i} is equivalent to $\left(\frac{2c}{c+1}\right) \cdot \frac{c+1}{2} \leq c$, which trivially holds. We show the function of $a$ representing the left hand side is decreasing in $a$ for $a \geq 2$, so it attains its maximum at $a=2$. We take $x = 1/a$ and obtain that \ref{eq:i} is equivalent to 
	\begin{align} \label{eq:essentialr}
	& \left( \frac{2c}{c+1}\right)^{2(1-x)} \cdot \left(1 + (c-1)x\right) \leq c \; \; \; \mbox{for} \; \; \;	 0 < x \leq \frac{1}{2}
	\end{align}
Define the function $g: (0, 2] \rightarrow \Re$ which is the log of the left hand side in the inequality above 
$$
g(x) = 2(1-x) \cdot \log{\left(\frac{2c}{c+1}\right)} + \log{\left(1+(c-1)x\right)}
$$ 
The derivative of $g$ is
$$
g'(x) = - 2 \log{ \left(\frac{2c}{c+1}\right)} + \frac{c-1}{1+(c-1)x}
$$
Using the inequality $\log{y} \leq y - 1$ we obtain $\log{\left(\frac{2c}{c+1}\right)} \leq \frac{c-1}{c+1}$, and so the derivative can be bounded by
$$
g'(x) \geq -2 \cdot \left( \frac{c-1}{c+1} \right) + \frac{c-1}{1 + \frac{c-1}{2}} = 0
$$
Thus inequality \ref{eq:essentialr} holds, which implies \ref{eq:i}. Thus the worst case is obtained for $a=2$, so inequality (\ref{eq:equiv_concave_fac}) holds for all $a \in \{2, \ldots, d\}$. Then  
	by Proposition \ref{rmk:concave}, the function $f_{c}$ satisfies condition (3) of the abstract theorem (Theorem \ref{thm:abstract}). It follows that $f_c$ satisfies all the conditions of Theorem \ref{thm:abstract} as required.
\end{proof}

\begin{figure}[h!]
	\caption{\textsc{Approximately Optimal Forecasting Algorithm}}
	\begin{tcolorbox}
		\emph{\textbf{Input:}} \begin{description}
			\item[$\clubsuit$] Set of $n$ experts, $d$ options, and upper bound $b$ on the number of mistakes that the best expert can make.
			\item[$\clubsuit$] Observe the predictions of the experts, which give a decomposition of the current state $\vec{k}$ into $(\vec{k}^1, \ldots, \vec{k}^d)$.
		\end{description}
		
		\emph{\textbf{Output:}} Probability vector $\vec{p} = (p_1, \ldots, p_d)$, where the forecaster selects choice $i$ with probability $p_i$.
		
		\bigskip
		
		\bigskip
		
		$\triangleright \; \;$ Compute 
		\begin{align}
		f(\vec{k}) = \min_{c > 1} \log_{\gamma}{(c^b \cdot k_0 + c^{b-1} \cdot k_1 + \ldots + k_b)}, \; \; \; \mbox{where} \; \; \; \gamma = \left( \frac{2c}{c+1} \right)^2 \notag
		\end{align}
		\emph{// For a fast approximation, taking $c = \log_{2}{(n)}/b$ suffices for the bound.}
		
		\medskip
		
		$\triangleright \; \; $ For every $i < d$, predict choice $i$ with probability 
		$
		p_i = \max\{ 0, 1 + f(\vec{s}_i) - f(\vec{k})\}
		$ 
		and $d$ with probability $p_d = 1 - \sum_{i < d} p_i$, where $\vec{s}_i$ is the successor state when choice $i$ is correct.
	\end{tcolorbox}
\end{figure}

\begin{theorem} \label{thm:ub}
	Consider a forecaster with $n$ experts, where the best expert makes at most $b= b(n)$ mistakes. Then
	the expected loss of the optimal forecaster is at most 
	$$\log_{4}{(n)} + b \cdot \left[\log_{4}{\left( \log_{2}{(n+1)} \right)} +4 \right] + 1.$$
\end{theorem}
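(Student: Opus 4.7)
The plan is to invoke Lemma~\ref{lem:log_combo} with the choice $c = \log_2(n+1)$, which matches the inner logarithm appearing in the target bound. By that lemma,
$$
\ell(n,0,\ldots,0) \;\leq\; f_c(n,0,\ldots,0) \;=\; \log_\gamma(c^b n) \;=\; \frac{\log_2 n + b\log_2 c}{\log_2 \gamma},
$$
where $\gamma = \left(2c/(c+1)\right)^2$. The key identity is $\log_2\gamma = 2 - 2 h(c)$ with $h(c) := \log_2(1 + 1/c)$, so the right-hand side becomes $(\log_4 n + b\log_4 c)/(1 - h(c))$, isolating the desired leading terms $\log_4 n$ and $b\log_4 \log_2(n+1)$.

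Next I would control the factor $1/(1-h(c))$ using the elementary inequalities $1/(1-h) \leq 1 + 2h$ for $h \in [0, 1/2]$ and $\log_2(1+x) \leq x/\ln 2$ for $x \geq 0$; the latter yields $2 h(c) \leq 2/(c \ln 2) < 3/c$. Substituting $c = \log_2(n+1)$ gives
$$
\ell(n,0,\ldots,0) \;\leq\; \log_4 n + b\log_4 \log_2(n+1) + \frac{3 \log_4 n}{\log_2(n+1)} + \frac{3 b \log_4 \log_2(n+1)}{\log_2(n+1)}.
$$
The first correction is at most $3/2$ since $\log_4 n \leq \log_2(n+1)/2$. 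The second is at most $3b/2$ via $(\log_2 x)/x \leq 1$ for $x \geq 2$. Hence for $b \geq 1$ the target inequality reduces to the arithmetic fact $3/2 + 3b/2 \leq 4b + 1$, i.e., $b \geq 1/5$, which trivially holds.

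I would dispatch the remaining cases separately. The perfect-expert case $b = 0$ follows directly from Theorem~\ref{lem:exact_perfect}, which gives $\ell(n) \leq \log_4 n + 1/2 \leq \log_4 n + 1$. For the small values of $n$ where $c = \log_2(n+1)$ lies below the threshold $2+\sqrt{2}$ (so $h(c) \leq 1/2$ fails and the expansion above is not valid), the claimed bound is already comfortably larger than what a trivial forecaster achieves; for instance, following the currently-best expert incurs loss at most $b + \log_2 n$.

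The main obstacle is constant bookkeeping: the statement has the clean coefficient $4b+1$ on the lower-order terms, so each approximation must absorb into these constants with a bit of slack. This is precisely what pins down the choice $c = \log_2(n+1)$, rather than the related $c = \log_2(n)/b$ used in the algorithm for the sharper bound of Theorem~2. The rest is careful calculation from Lemma~\ref{lem:log_combo}.
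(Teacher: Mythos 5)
Your main line is essentially the paper's own proof: both invoke Lemma~\ref{lem:log_combo} with a value of $c$ of order $\log_2 n$ (the paper takes $c=\log_2(4n+4)/\ln 4$, you take $c=\log_2(n+1)$), lower-bound $\log_2\gamma = 2-2\log_2\left(1+\tfrac1c\right)$ via $\ln x\le x-1$, and finish with bookkeeping; your computation in the main regime is correct, and the $b=0$ case via Theorem~\ref{lem:exact_perfect} is fine. (Minor slip: the condition $h(c)\le 1/2$ holds as soon as $c\ge 1+\sqrt2$, not $2+\sqrt2$; this only makes your case split more conservative.)

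The one genuine flaw is the fallback for small $n$: the claim that ``following the currently-best expert incurs loss at most $b+\log_2 n$'' is not a valid bound and cannot be cited as such. Follow-the-leader is deterministic, and with adversarial tie-breaking each mistake only eliminates (or demotes) the single expert being followed; already for $b=0$ it can be forced to make $n-1$ mistakes, and the $\log_2 n$ guarantee requires majority-of-leaders, whose deterministic extensions to $b\ge 1$ (halving/weighted majority) give bounds like $O(b+\log_2 n)$ with constants strictly larger than $1$, not $b+\log_2 n$. So as written this step fails. Fortunately the gap is local and easy to repair: for the finitely many $n$ below your threshold (with the correct threshold, $n\le 4$; note also $n=1$ must be separate since Lemma~\ref{lem:log_combo} needs $c>1$) you can simply evaluate $f_c(n,0,\ldots,0)=\frac{\log_2 n + b\log_2 c}{\log_2\gamma}$ directly, which for $n\in\{2,3,4\}$ is at most roughly $2.1+1.3b\le \log_4 n + 4b+1$ for $b\ge 1$, and for $n=1$ following the unique expert gives loss at most $b$. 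Alternatively, the paper's slightly shifted choice $c=\log_2(4n+4)/\ln 4$ keeps $2-2/(c\ln 2)>0$ for every $n\ge 1$ and avoids the case split altogether.
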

\begin{proof}
	For every $c > 1$, let $\gamma = \left(\frac{2c}{c+1}\right)^2$ and define $f_{c} : \mathbb{N}^{b+1} \setminus \{(0, \ldots, 0)\} \rightarrow \Re$ as 
	$$
	f_{c}(k_0, \ldots, k_b) =  \log_{\gamma}{\left(\sum_{i=0}^{b} c^{b-i} \cdot k_i\right)}.
	$$
	By Lemma (\ref{lem:log_combo}), this function
	satisfies the conditions of Theorem \ref{thm:abstract}, so it gives an upper bound on the expected loss of an optimal forecaster in the worst case.

In order to get the required bound we will optimize the value of $c$ for every input state $\vec{k}$. 
 Observe that $\lim_{c \to 1} f_c(\vec{k}) = \infty$ and $\lim_{c \to \infty} f_c(\vec{k}) = \infty$, so the expression $\min_{c > 1} f_{c}(\vec{k})$ is well defined.
 Define $f : \mathbb{N}^{b+1} \setminus \{(0, \ldots, 0)\} \rightarrow \Re$, where
$$
f(\vec{k}) = \min_{c > 1} f_{c}(\vec{k})
$$

We show that $f$ 
gives the required upper bound. 
The function $f$ is the infimum of a family of concave functions, so it is also concave. Computing the value for $f(\vec{k})$ is equivalent to finding a global minimum of the function
$g : \Re \rightarrow \Re$ defined as 
$$
g(c) = \log_{\gamma}{\left( \sum_{j=0}^b c^{b-j} \cdot k_j \right)}, \; \; \mbox{where} \; \; \gamma = \left(\frac{2c}{c+1} \right)^2
$$
\smallskip

Take 
$$c = \frac{\log_{2}{(4n+4)}}{ \ln{(4)}}.$$
\smallskip

Using the inequality $\ln{x} \leq x - 1$, we can bound the expression $\log_{2}{\gamma}$ as follows:
\begin{align} \label{eq:boundgamma}
\log_{2}{\gamma} & = \log_{2}{\left[\left( \frac{2c}{c+1} \right)^2\right]} = 2 - 2\log_{2}{\left(\frac{c+1}{c} \right)} \notag \\
& \geq  2 - 2/\left(c \cdot \ln{2} \right)\notag \\
& =  \frac{2 (c \cdot \ln{2} - 1)}{c \cdot \ln{2}} = \frac{1}{0.5 + 1 / \log_{2}{(n+1)}}
\end{align}

Applying inequality (\ref{eq:boundgamma}) with this choice of $c$, the function $f_c$ evaluated at the start state $(n, 0, \ldots, 0)$ can be bounded by
\begin{align} \label{eq:ubn0star}
f_{c}(n, 0, \ldots, 0) & = \log_{\gamma}{(c^b \cdot n)} = \frac{\log_2{\left(c^b \cdot n\right)}}{\log_{2}{\gamma}} \notag \\
& \leq 
\left( \frac{1}{2} + \frac{1}{\log_{2}{(n+1)}} \right) \cdot \left( \log_{2}{(n)} + b \cdot \log_{2}{\left( 2 + \log_{2}{(n+1)}\right)}\right) \notag \\
& \leq \log_{4}{(n)} + 1 + b \cdot \left(\log_{4}{\left( \log_{2}{(n+1)} + 2\right)} + \frac{\log_{2}{\left( \log_{2}{(n+1)} + 2\right)}}{\log_{2}{(n+1)}} \right)  \notag \\ 
& \leq \log_{4}{(n)} + 1 + b \cdot \left(\log_{4}{\left( \log_{2}{(n+1)} \right)} +4 \right)  
\end{align}

Inequality (\ref{eq:ubn0star}) implies that the expected loss when starting with $n$ experts that have made no mistakes initially is bounded by
\begin{align}
\ell(n, 0, \ldots, 0) \leq f(n, 0, \ldots, 0) 
\leq  f_{c}(n, 0, \ldots, 0)  
 \leq  \log_{4}{(n)} + 1 + b \left(\log_{4}{\left( \log_{2}{(n+1)} \right)} +4 \right) \notag
\end{align}
as required.
\end{proof}

We can obtain in fact a more precise estimate when $b < \ln{(n)}/2$ by using a value of $c$ that depends on both $n$ and $b$.

\begin{theorem} \label{thm:ub_precise}
		Consider a forecaster with $n$ experts, where the best expert makes at most $b= b(n)$ mistakes. For any $1 \leq b < \ln{(n)}/2$,
	the expected loss of the optimal forecaster is at most 
	$$
	\left( 1 + 2b/\ln(n)\right) \left( \log_{4}{(n)} +
	b \cdot \log_{4}{\left(\log_{2}{n}/b\right)} \right).
	$$
\end{theorem}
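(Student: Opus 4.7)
The plan is to apply Lemma \ref{lem:log_combo}, which for every $c > 1$ gives an upper bound $f_c(\vec{k}) = \log_\gamma(\sum_{i=0}^b c^{b-i} k_i)$ on the minimax loss, with $\gamma = (2c/(c+1))^2$. In Theorem \ref{thm:ub} the choice $c \asymp \log_2 n$ was made to keep the bound uniform in $b$. To sharpen the leading constant when $b$ is small relative to $\ln n$, I would instead set $c = (\log_2 n)/b$. The hypothesis $b < \ln(n)/2$ gives $c \ln 2 = (\ln n)/b > 2$, in particular $c > 1$, so this choice is admissible.

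Evaluated at the initial state $(n, 0, \ldots, 0)$, the bound becomes
\[
f_c(n, 0, \ldots, 0) = \frac{\log_2 n + b \log_2 c}{\log_2 \gamma} = \frac{\log_2 n + b \log_2((\log_2 n)/b)}{\log_2 \gamma}.
\]
The numerator is already twice the desired expression $\log_4 n + b \log_4((\log_2 n)/b)$, so all that remains is to show $1/\log_2 \gamma \le (1 + 2b/\ln n)/2$.

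For the denominator I would write $\log_2 \gamma = 2 - 2\log_2((c+1)/c)$ and apply $\ln(1 + 1/c) \le 1/c$ to obtain $\log_2((c+1)/c) \le 1/(c \ln 2)$. Substituting $c \ln 2 = (\ln n)/b$ gives $\log_2 \gamma \ge 2(1 - b/\ln n)$. Since $b/\ln n \in [0, 1/2)$ by hypothesis, the elementary inequality $1/(1-x) \le 1 + 2x$ on $[0, 1/2]$ (which holds because $x \mapsto 1/(1-x) - 1 - 2x$ is convex on that interval and vanishes at both endpoints) yields $1/\log_2 \gamma \le (1 + 2b/\ln n)/2$. Multiplying this by the numerator, and using that halving a $\log_2$ produces a $\log_4$, delivers exactly the bound claimed in the theorem.

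The main obstacle is recognizing the right value of $c$: the choice $c = (\log_2 n)/b$ simultaneously balances the two terms $b\log_2 c$ and $\log_2 n$ inside the numerator of $f_c(n, 0, \ldots, 0)$ and keeps $\log_2 \gamma$ within a factor $1 - b/\ln n$ of its asymptotic ceiling of $2$. One could instead try to locate the exact minimizer of $c \mapsto f_c(n, 0, \ldots, 0)$, but its first-order condition is transcendental and has no closed form; the value $(\log_2 n)/b$ is a simple near-optimum for which every subsequent estimate reduces to a one-line calculation using $\ln(1+x) \le x$ and the convexity bound $1/(1-x) \le 1+2x$.
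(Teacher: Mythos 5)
Your proposal is correct and follows essentially the same route as the paper: the same choice $c=\log_2(n)/b$ in the bound $f_c$ from Lemma~\ref{lem:log_combo}, evaluated at the start state $(n,0,\ldots,0)$. Your explicit chain via $\ln(1+1/c)\le 1/c$ and $1/(1-x)\le 1+2x$ on $[0,1/2]$ is just a cleaner spelling-out of the step the paper attributes to ``Taylor's inequality,'' and yields the identical final bound.
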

\begin{proof}
The proof is similar to that of Theorem \ref{thm:ub}, except we define $
c = \log_{2}{(n)}/b.$ Recall we obtain the upper bound by evaluating the function $f_{c}(k_0, \ldots, k_b) =  \log_{\gamma}{\left(\sum_{i=0}^{b} c^{b-i} \cdot k_i\right)}$, where $\gamma = \left(\frac{2c}{c+1}\right)^2$, at the start state 
$(n, 0, \ldots, 0)$. Then 
\begin{align} \label{eq:idfc}
f_c(n, 0, \ldots, 0) = \log_{\gamma}{(n \cdot c^b)} = \log_{\gamma}{(n)} + b \cdot \log_{\gamma}{(c)} = \frac{\log_{4}{(n)} + b \cdot \log_{4}{\left(\frac{\log_{2}{n}}{b} \right)}}{\log_{2}{\left( \frac{2c}{c+1} \right)}}
\end{align}
For this choice of $c$, by using Taylor's inequality we obtain 
$$
\frac{1}{\log_{2}{\left(\frac{2c}{c+1} \right)}} \leq 1 + \frac{b}{\ln{n}} + \frac{2 b^2}{\ln{n}^2}
$$
Using the above inequality in identity \ref{eq:idfc} implies that
$$
f_c(n, 0, \ldots, 0) \leq \left(1 + \frac{2b}{\ln{n}}\right) \cdot \left(\log_{4}{(n)} + b \cdot \log_{4}{\left(\frac{\log_{2}{n}}{b} \right)} \right)
$$
which is the required bound.
\end{proof}
We note that the bound of Theorem \ref{thm:ub_precise} is within at most $O(b)$ (additive) of the lower bound.

\vspace{6mm}

\section{Figures}

On the next page we show next several plots, with the function $m$ that gives the exact expected loss, as well as the upper bounds given by the functions $f$ and $f_c$ (for a specific $c$) from Theorem \ref{thm:ub}. The plots in Figure (3.a) are for starting states of the form $(n,0)$, where $n = 0 \ldots 300$, while in Figure (3.b) for starting states $(n, 0, 0)$, where $n = 0 \ldots 80$.

\begin{figure}[h!]
	\centering
	\subfigure[For $b=1$ mistakes, the expected loss $\ell(n,0)$ for $n = 0 \ldots 300$ (in red), the upper bound given by $f(n,0)$ (in black) and the upper bound given by $f_c(n,0)$ with $c = \frac{\log_{2}{(4n+4)}}{2 \cdot \ln{2}}$ (in green).]
	{
		\includegraphics[scale = 0.55]{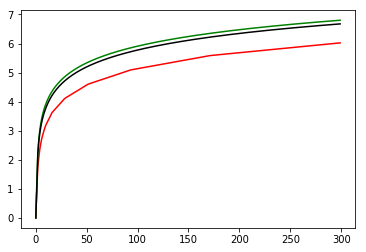}
	}
	\subfigure[For $b=2$ mistakes, the expected loss $\ell(n,0,0)$ for $n = 0 \ldots 80$ (in red), the upper bound given by $f(n,0,0)$ (in black) and the upper bound given by $f_c(n,0,0)$ with $c = \frac{\log_{2}{(4n+4)}}{2 \cdot \ln{2}}$ (in green).]
	{
		\includegraphics[scale = 0.55]{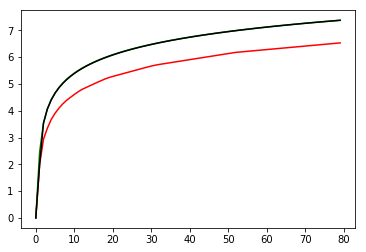}
	}
\caption{The loss function $\ell$ and its upper bounds.}
	\label{fig:upperbounds}
\end{figure}

\begin{figure}[h!]
	\centering
	\subfigure[The function $\ell(i,j)$.]
	{
		\centering
		\includegraphics[scale = 0.5]{m_ij_80.png}
		\label{fig:mij80}
	}\\
\subfigure[Upper bound given by the function $f_{c}(i,j)$,
$\;\;\;\;\;\;\;\;\;\;\;\;\;\;\;\;\;\;$where $c = \frac{\log_{2}{(4i+4j+4)}}{2 \cdot \ln{2}}.$]
{
	\includegraphics[scale = 0.5]{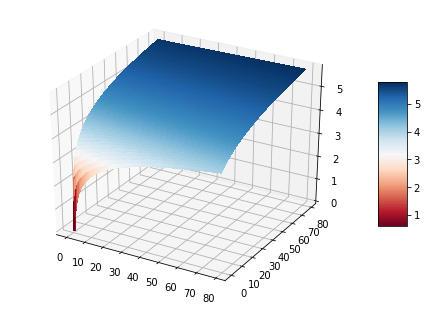}
	\label{fig:fapproxij80}
}
	\subfigure[Upper bound given by the function $\;\;\;\;\;\;\;\;\;\;\;\;\;\;f(i,j)= \inf_{c > 1} f_{c}(i,j)$.]
	{
		\includegraphics[scale = 0.5]{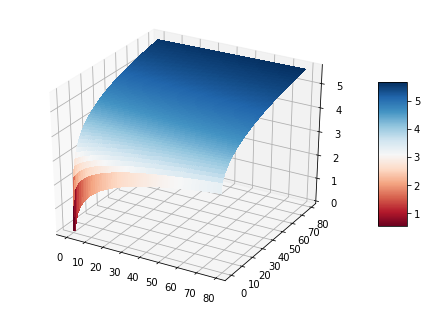}
		\label{fig:fij80}
	}
	\caption{For $b=1$ mistakes and $i, j = 0 \ldots 80$, the functions $\ell(i,j)$ (Figure a), $f(i,j)$ (Figure b), and $f_c(i,j)$ for $c= \frac{\log_{2}{(4i+4j+4)}}{2 \cdot \ln{2}}$ (Figure c).}
	\label{fig:mf}
\end{figure}

\newpage
\section{Acknowledgements}

We thank Jacob Abernethy for detailed discussion about his work on binary prediction, S\'ebastien Bubeck for pointers to the literature, and Nika Haghtalab for useful discussion in the early stages of the project.

\addcontentsline{toc}{section}{\protect\numberline{}References}%
\bibliography{expertbib}

\newpage
\appendix

\section{Values for the 2D problem}

In this section we include the exact values of the function $m(k,\ell)$ for $b=1$ and $k = 0 \ldots 39$, $\ell = 0 \ldots 15$.

\begin{figure}[h!]
\centering
\centerline{\includegraphics[scale=0.8]{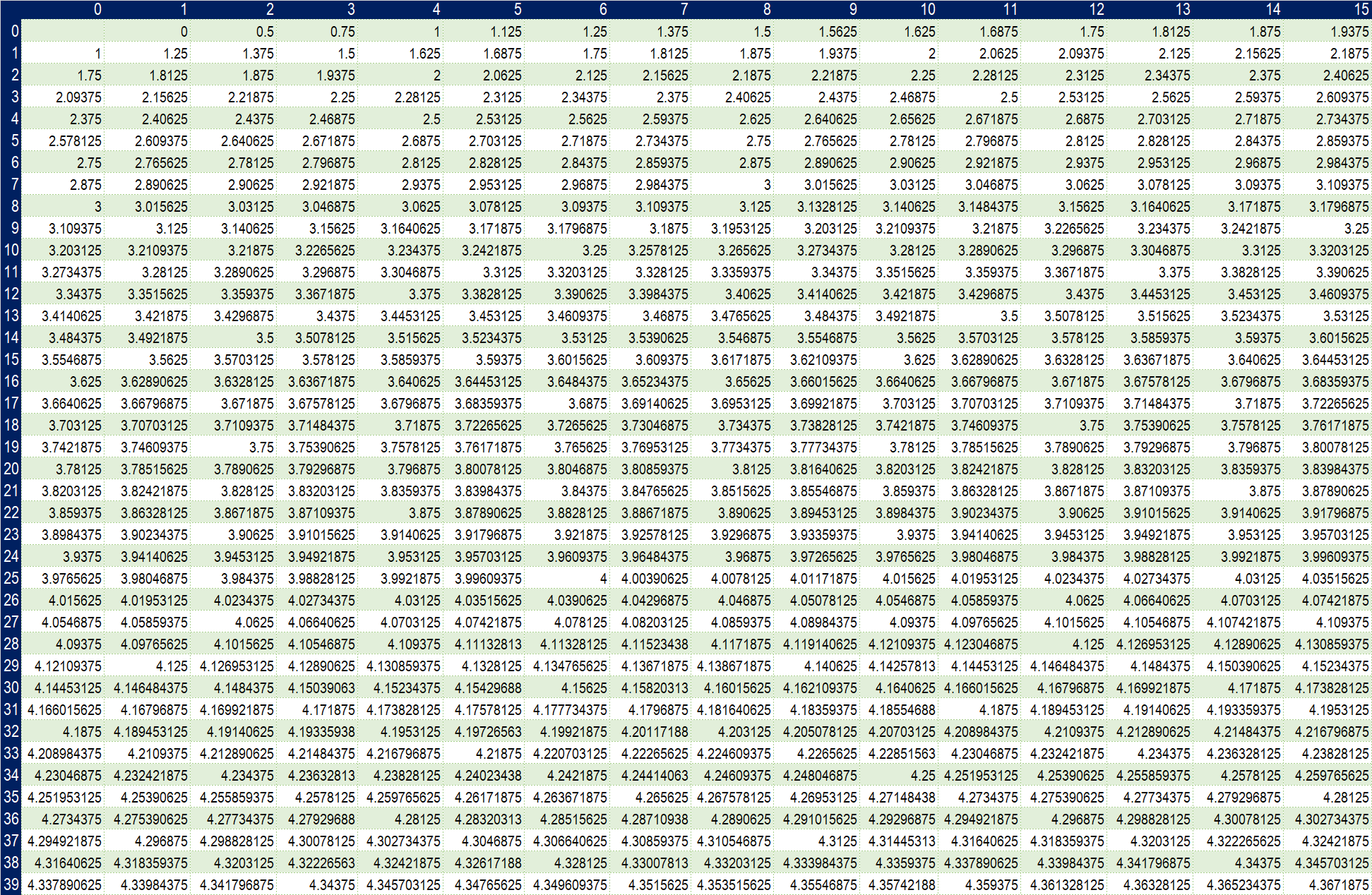}}
\label{fig:mistakes_ij}
\end{figure}

\end{document}